\theoremstyle{plain}
\newtheorem{theorem}{Theorem}[section]
\newtheorem{proposition}[theorem]{Proposition}
\newtheorem{lemma}[theorem]{Lemma}
\theoremstyle{definition}
\newtheorem{definition}[theorem]{Definition}
\theoremstyle{remark}
\newtheorem{remark}[theorem]{Remark}
\newcommand{\R}{\mathbb{R}}
\newcommand{\A}{\mathcal{A}}
\newcommand{\X}{\mathcal{X}}
\newcommand{\ta}{\tilde{a}}
\newcommand{\E}{\mathbb{E}}
\newcommand{\bitem}{\begin{itemize}}
\newcommand{\eitem}{\end{itemize}}
\newcommand{\benum}{\begin{enumerate}}
\newcommand{\eenum}{\end{enumerate}}
\newcommand{\beq}{\begin{equation}}
\newcommand{\eeq}{\end{equation}}
\newcommand{\beqs}{\begin{equation*}}
\newcommand{\eeqs}{\end{equation*}}
\newcommand{\bas}{\begin{align*}}
\newcommand{\eas}{\end{align*}}
\title{Bayes-Optimal Classifiers under Group Fairness
\thanks{This technical report has been largely superseded by our later paper: ``Bayes-Optimal Fair Classification 
with Linear Disparity Constraints via Pre-, In-, and Post-processing'' \citep{zeng2024bayesoptimal}. Please cite that one instead of this  technical report.}
}
\author{
  Xianli Zeng\\
University of Pennsylvania\\
  Philadelphia, PA\\
  \texttt{zengxl19911214@gmail.com} \\
   \And
  Edgar Dobriban \\
  University of Pennsylvania\\
  Philadelphia, PA\\
  \texttt{dobriban@wharton.upenn.edu} \\
     \And
  Guang Cheng \\
 University of California, Los Angeles\\
  Los Angeles, CA\\
  \texttt{guangcheng@ucla.edu} \\
}
\begin{document}
\maketitle

\begin{abstract}
Machine learning algorithms are becoming integrated into more and more high-stakes decision-making processes, such as in social welfare issues. Due to the need of mitigating the potentially disparate impacts from algorithmic predictions, many approaches have been proposed in the emerging area of fair machine learning.
However, the fundamental problem of characterizing \emph{Bayes-optimal} classifiers under various group fairness constraints {has only been investigated in some special cases.} 
Based on the classical Neyman-Pearson argument \citep{NP1933,Shao2003} for optimal hypothesis testing, this paper provides a {unified} framework for deriving Bayes-optimal classifiers under group fairness. This enables us to propose a group-based thresholding method we call FairBayes\footnote{Codes for FairBayes are available at  \url{https://github.com/XianliZeng/FairBayes}.}, that can directly control disparity, and achieve an essentially optimal fairness-accuracy tradeoff. 
These advantages are supported by thorough experiments.
\end{abstract}

\section{Introduction}
\label{introduction}
With the rapid development of machine learning, algorithmic classifiers are increasingly applied in decision-making systems that have a long-lasting impact on individuals, including job applications, educational decision-making, credit lending and criminal justice. 
Accordingly, there are growing concerns about the disparate effects of machine learning algorithms.
Unfortunately, empirical studies have shown that machine learning algorithms focus mainly on utility, retaining or even amplifying implicit unfairness in historical data \citep{JJSL2016, BS2016, ZWY2017, SEC2019}. As a consequence, official institutions and organizations such as the White House \citep{WH2014,WH2016} and UNESCO \citep{UNESCO2020} advocate considering fairness in AI practice.

In response, several statistical metrics to quantify algorithmic disparity have been developed. For example, group fairness \citep{CFM2009,DHPT2012,HPS2016} targets to ensure various types of statistical parity across
distinct protected groups, while individual fairness \citep{JKMR2016,PKG2019,RBMF2020} aims to provide nondiscriminatory predictions
for similar individuals. 
In this paper, we focus on well studied group fairness criteria,
including  demographic parity \citep{CFM2009,KAS2012,CHS2020}, 
equality of opportunity \citep{HPS2016,ZLM2018,CHS2020}, predictive equality \citep{CSFG2017} and overall accuracy equality  \citep{BHJR2021}.

Recently, a large body of work has designed learning algorithms satisfying fairness criteria \citep{FFMS2015,KJ2016,CFDV2017,NH2018,ZLM2018,CFC2018,JL2019,CLKVN2019,XWYZW2019}.
In contrast, only a few papers discuss the fundamental problem of determining the Bayes-optimal classifiers  under fairness constraints. Equivalently, among all fair classifiers with respect to some criterion, which one is the most accurate?

\cite{CSFG2017} proved that, under several group fairness measures, the fair Bayes-optimal classifiers 
are group-wise thresholding rules with unspecified thresholds.  
\cite{MW2018} related demographic parity and equality of opportunity
to cost-sensitive
risks and derived fair Bayes-optimal classifiers under these two fairness measures.
In their work, the forms of the Bayes-optimal classifiers depend on some tuning parameters, and are only implicitly related to the specific fairness violation level. 
Under the limited setting of perfect demographic parity and equality of opportunity, exact forms of fair Bayes-optimal classifiers were derived in \cite{CCH02019} and \cite{SC2021}, respectively. 
Despite this recent theoretical progress,
 there is no systematic and unifying analysis of fair Bayes-optimal classifiers under group fairness measures, allowing both exact and approximate fairness.
 
In practice, it is important to \emph{directly} control the level of disparity or fairness violation.
However, the explicit dependence of the
optimal classifier on the unfairness level has not been elucidated. Moreover, Bayes-optimal classifiers with respect to group fairness measures such as overall accuracy equality have not been investigated in existing works. 
As a result, it is desirable  to establish a unified theoretical framework for Bayes-optimal fair classification problems, which allows directly controlling the level of disparity.

In this paper, we consider binary classification problems with protected attributes. 
We propose a unified framework for deriving Bayes-optimal classifiers that can handle  various fairness measures, by leveraging a novel connection with the Neyman-Pearson argument for optimal hypothesis testing.  
In particular, we derive the explicit and direct dependence of the fair Bayes-optimal classifier on the level of disparity for a binary protected attribute. 
This allows the user to directly specify "level of disparity" as an input parameter to the fair Bayes-optimal classifier, and control disparity at the specified level; which was not possible in prior work. We further derive the fair Bayes-optimal classifier for a multi-class protected attribute under perfect fairness in the Appendix.
Consistent with prior work, but much more generally, we find that thresholds for the optimal classifiers need to be adjusted \emph{for each protected group} to ensure statistical parity. These adjustments depend delicately on a number of problem characteristics, including the disparity level and the proportion of each group in the population. 

In summary, our theoretical approach goes beyong existing work in several aspects. First, we introduce a new proof strategy that connects fair classification with the well-established Neyman-Pearson argument. Second, we provide a unified approach that can handle many different fairness constraints. For some of these (especially overall accuracy equality), Bayes-optimal classifiers have not been studied before. 
Third, for a binary protected attribute, we establish the explicit dependence of the fair Bayes-optimal classifiers on an arbitrary disparity level  (in some cases more general than in prior works, which---as discussed above---require zero disparity). This allows users to \emph{directly control the level of unfairness/disparity}.
{Fourth, we do not impose any distributional assumptions on features so that the boundary case (where the features lay exactly on the “fair” boundary) does not necessarily have zero probability.  In this case, the optimal classifiers must be  carefully randomized (refer to the second term of \eqref{op-de-rule-dp} in Theorem \ref{thm-opt-dp}).
}

After deriving the fair Bayes-optimal classifiers, we design a group-based thresholding algorithm for fair classification; which we call FairBayes. 
Our first training step learns the feature-conditional probability of $Y=1$ for each protected group 
using traditional machine learning algorithms. 
In the second step, the optimal fair thresholds can be estimated by a one-dimensional search, using for instance the bisection method. 
We emphasize some notable
advantages of our FairBayes algorithm. 
First, 
the level of disparity is directly controlled. Second, compared to pre- and in-processing methods that require repeated training procedures for different fairness metrics or different levels of discrepancy, our post-processing method is computationally more efficient, as the constraints are handled only in the faster second step.

We summarize our contributions as follows.
\begin{itemize}
    \item We provide a unified framework for deriving Bayes-optimal classifiers under group fairness measures, including demographic parity, equality of opportunity, predictive equality and overall accuracy equality. The theoretical results can serve as a guideline for practical algorithm design.
    
    \item We propose a simple post-processing algorithm---called FairBayes---for binary fair classification within this theoretical framework.  The proposed FairBayes algorithm is theoretically justified and \emph{can directly control the disparity}. Moreover, as our classifier mimics Bayes-optimal classifiers, it can achieve a favorable fairness-accuracy tradeoff. 
    \item We demonstrate empirically that our FairBayes algorithm compares favorably to prior methods.
\end{itemize}

\section{Related Literature}\label{Literature}
Literature on fair machine learning algorithms has grown rapidly in recent years. In general, these algorithms can be broadly categorized  into three categories: pre-processing, in-processing and post-processing. Here we briefly introduce them and refer readers to \cite{SC2020} for a comprehensive review.

Pre-processing methods reduce biases implicit in the training data, and train learning algorithms on the debiased data. Examples include transformations \citep{FFMS2015,KJ2016,JL2019,CFDV2017}, fair representation learning \citep{ZWKT2013,CKYMR2016,CMJW2019} and fair generative models \citep{XYZ2018,SHC2019}. These methods are convenient to apply, as they do not change the training procedure. However, as argued in \cite{NARBSB2019}, biases could occur
even after pre-processing.

In-processing algorithms handle the fairness constraint during the training process. 
A commonly applied strategy is to incorporate fairness measures as a regularization term into the optimization objective \citep{GCAM2016,ZBR2017,NH2018,CLKVN2019,CJG2019,CHS2020}. However, one challenge is that fairness measures are often non-convex and even non-differentiable with respect to model parameters. This can make scalable training much more difficult or even impossible.
In the alternative approach of adversarial learning \citep{ZLM2018,CFC2018,XWYZW2019,LV2019}, 
the ability of the classifier to predict
the protected attribute is minimized. 
Although this can achieve promising results through careful design, its training process often lacks stability as a min-max optimization problem is considered \citep{CHS2020}. 
Other in-processing algorithms include 
domain-based training \citep{WQK2020}, 
where the protected attribute is explicitly encoded and its effect is mitigated.

Post-processing applies conventional learning algorithms to training data, 
and aims to mitigate disparities in the model output. 
The most common post-processing algorithm is the group-wise thresholding rule \citep{BJA2016,CSFG2017,VSG2018,MW2018,CCH02019,I2020,SC2021} that estimates the conditional probability of $Y=1$, 
and assigns individual thresholds to protected groups
aiming for statistical parity. 
Such methods have the advantage that they can, in principle, achieve fairness directly. In this paper, we propose a post-processing algorithm, FairBayes, to estimate the fair Bayes-optimal classifier under several group fairness measures. FairBayes is computationally efficient with  direct control on the level of disparity, and achieves  a favorable fairness-accuracy tradeoff.


\section{Preliminary and Notations}\label{Pre_and _Not}
In fair classification problems, two types of feature are observed: the usual feature $X\in\mathcal{X}$, and the protected (or, sensitive) feature $A\in\mathcal{A}=\{0,1\}$ \footnote{We consider a binary protected attribute in the main text and extend our results to multi-class protected attributes in the Appendix.}, with respect to which we aim to be fair. 
Here, we consider a binary classification problem with labels in $\mathcal{Y}=\{0,1\}$. 
For example, in a credit lending setting, 
$X$ may refer to common features such as 
 education level and income, $A$ may indicate the race or gender of the individual and $Y$ may correspond to the status of repayment or defaulting on a loan.
  
A randomized classifier outputs a prediction $\widehat{Y}\in\{0,1\}$ with a certain probability based on $X$ and $A$,
\begin{definition}[Randomized classifier]
  A randomized classifier is a measurable function
  $f:\mathcal{X}\times\{0,1\} \to [0,1]$, indicating the probability of predicting $\widehat{Y}=1$ when observing $X=x$ and $A=a$. We denote by $\hat{Y}_f=\hat{Y}_f(x,a)$ the prediction induced by the classifier $f$.
 \end{definition}
In this paper, we consider group fairness measures that require non-discrimination of protected groups. Here, we only introduce the definition of demographic parity. Other group fairness measures---equality of opportunity, predictive equality and overall accuracy equality---are introduced in Section \ref{sec:thm} of the Appendix.
  \begin{definition}[Demographic Parity]
  A classifier $f$ satisfies demographic parity if its prediction $\widehat{Y}_f$
   is statistically independent of the protected attribute $A$:
   $$P(\widehat{Y}_f  = 1|A = 1) =P(\widehat{Y}_f  = 1|A = 0).$$
  \end{definition}
  In applications, exact demographic parity may require a large sacrifice of accuracy, 
 and a ``limited'' disparate impact could be preferred. Similar to \citep{CHS2020}, we use the difference with respect to the demographic parity (DDP)
 to measure disparate impact:
$$\text{DDP}(f)=P(\widehat{Y}_f  = 1|A = 1) -P(\widehat{Y}_f  = 1|A = 0).$$

{\bf Notations.} We denote $p_a:=P(A=a)$; $p_{Y,a}:=P(Y=1|A=a)$; $\eta_{a}(x):=P(Y=1|A=a,X=x)$. 
Further, we denote by $P_X(x)$, $P_{X|A=a}(x)$ and $P_{X|A=a,Y=y}(x)$ the distribution function of $X$, the conditional distribution function of $X$ given $A=a$, and the conditional distribution of $X$ given $A=a,Y=y$, respectively. 

\section{Fair Bayes-Optimal Classifier}
Without the fairness constraint, the Bayes-optimal classifier, which minimizes the misclassification rate, is defined as
$f^\star\in\underset{f}{\text{argmin}} [P(Y\neq \widehat{Y}_f)]$; where recall that we predict $\hat Y_f=1$ with probability $f(x,a)$. A classical result  is the following \cite{DevroyeGL96, TsybakovBayes2004}:
\begin{proposition}\label{prop-ba-op} All Bayes-optimal classifiers $f^\star:\X\times \{0,1\}\to[0,1]$ have the form
$$f^\star(x,a)=I\left(\eta_a(x)>\frac12\right)+\tau_aI\left(\eta_a(x)=\frac12\right),$$
for all $(x,a)\in \X\times \{0,1\}$, where $I(\cdot)$ is the indicator function and $\tau_0,\tau_1\in[0,1]$ are two arbitrary constants. 
\end{proposition}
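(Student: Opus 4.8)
The plan is to reduce the global minimization of the misclassification rate to a pointwise optimization over the randomization probability $f(x,a)$, exploiting that the objective is affine in $f$. First I would condition on the observed pair $(X,A)$ and write
$$P(Y\neq \widehat{Y}_f)=\E\big[P(Y\neq \widehat{Y}_f\mid X,A)\big].$$
Because the randomization generating $\widehat{Y}_f$ is, by definition, independent of $Y$ given $(X,A)$, at a point $(X,A)=(x,a)$ the inner conditional misclassification probability splits into the two error events (predicting $1$ when $Y=0$, and predicting $0$ when $Y=1$), giving
$$P(Y\neq \widehat{Y}_f\mid X=x,A=a)=f(x,a)\big(1-\eta_a(x)\big)+\big(1-f(x,a)\big)\eta_a(x)=\eta_a(x)+f(x,a)\big(1-2\eta_a(x)\big).$$

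The central observation is that the final expression is affine in $f(x,a)\in[0,1]$ with slope $1-2\eta_a(x)$, so the expectation is minimized by minimizing this integrand separately at each $(x,a)$. When $\eta_a(x)>\tfrac12$ the slope is negative and the unique minimizer is $f(x,a)=1$; when $\eta_a(x)<\tfrac12$ the slope is positive and the unique minimizer is $f(x,a)=0$; and on the tie set $\{\eta_a(x)=\tfrac12\}$ the integrand does not depend on $f$, so every value in $[0,1]$ is optimal. Assembling these cases yields exactly the stated form $I(\eta_a(x)>\tfrac12)+\tau_a I(\eta_a(x)=\tfrac12)$, the free value on the tie set being recorded as the constant $\tau_a$ (more generally, any measurable $[0,1]$-valued selection there is admissible).

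Two points need care to turn this sketch into a complete proof, and the second is where I expect the real work to lie. First, I would confirm that pointwise minimization legitimately produces a global minimizer: the proposed $f^\star$ is measurable since $\eta_a$ is, and because the integrand is bounded below by its pointwise minimum for every competing measurable $f$, integrating preserves the inequality and $f^\star$ attains it. Second, and more delicately, to justify the word \emph{all} in the statement I would argue the converse, namely that any optimizer must coincide with the sign rule almost surely off the tie set: a deviation on a positive-probability subset of $\{\eta_a\neq\tfrac12\}$ would strictly increase the risk, whereas on $\{\eta_a=\tfrac12\}$ the value is genuinely unconstrained. The main obstacle is thus not the existence of the optimizer but this sharp characterization of the \emph{optimal set}---establishing strict suboptimality of any off-rule deviation on a positive-measure set---together with the observation that, since no distributional assumption is placed on $X$, the tie set may carry positive probability, so the freedom encoded by $\tau_a$ is real rather than vacuous.
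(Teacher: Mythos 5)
Your proof is correct, and it is the standard argument for this classical fact. One thing to note: the paper itself never proves Proposition 3.2 --- it is quoted as a classical result with citations to Devroye--Gy\"orfi--Lugosi and Tsybakov --- so there is no in-paper proof to compare against. However, your reduction is exactly the first step the paper performs when proving the \emph{constrained} version (Theorem 4.2): there the authors rewrite $P(Y\neq \widehat{Y}_f)$ as $P(Y=1)$ minus $\int f(x,a)\,(2\eta_a(x)-1)\,dP_{X,A}(x,a)$, which is your affine-in-$f$ integrand in integrated form, and then maximize; with no fairness constraint the pointwise maximization you carry out is precisely what that machinery collapses to (equivalently, the generalized Neyman--Pearson lemma with $m=0$ constraints). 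Your final caveat is also well taken and slightly sharpens the paper's statement: uniqueness of the optimizer can only hold $P_{X,A}$-almost everywhere (a deviation on a null set costs nothing), and on the tie set $\{\eta_a=\tfrac12\}$ any measurable $[0,1]$-valued selection is optimal, not only constants $\tau_a$; the paper's ``for all $(x,a)$'' phrasing with constant $\tau_a$ is best read as asserting that these rules are representatives of the optimal set up to such modifications. Since the paper deliberately avoids assuming $\eta_a(X)$ has a density here, your observation that the tie set can have positive probability --- so the randomization freedom is real --- matches the paper's own motivation for randomized classifiers in Theorem 4.2.
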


In each protected group, the Bayes-optimal classifier predicts the more likely class to maximize accuracy.
 
In this section, we derive the fair Bayes-optimal classifier with respect to demographic parity. We  consider the binary protected attribute case in the main text. 
We extend our theory to include fair cost-sensitive classification, multi-class protected attributes, as well as the other group fairness measures, including equality of opportunity, predictive equality and overall accuracy equality in Sections \ref{sec:thm_cost_sensitive} to \ref{sec:thm} of the Appendix. 
Denote by $\mathcal{F}_{D,\delta}$ the set of measurable functions satisfying the \emph{$\delta$- tolerance} constraint
$$\mathcal{F}_{D,\delta}=\{f: |\textup{DDP}(f)|\le \delta\}.$$
The $\delta$-fair Bayes-optimal classifier with respect to demographic parity is defined as
$$f_{D,\delta}^\star\in \underset{f\in\mathcal{F}_\delta}{\text{argmin}} [P(Y\neq \widehat{Y}_f)].$$
We define $D^\star=\sup_{f^*}\textup{DDP}(f^\star)$, where the supremum is taken over all Bayes-optimal classifiers $f^\star$  from Proposition \ref{prop-ba-op}.
\begin{theorem}\label{thm-opt-dp}
For any $\delta> 0$, all $\delta$-fair Bayes-optimal classifiers $f^\star_{D,\delta}$ have the following form:
\begin{compactitem}
    \item When $|D^\star|\leq \delta$,
    $f^\star_{D,\delta}$ can be any Bayes-optimal classifier $f^\star$ from Proposition \ref{prop-ba-op}.
    \item When $|D^\star|>\delta$, for all $x\in \mathcal{X}$ and $a\in \mathcal{A}$,
\begin{align}\label{op-de-rule-dp}
\begin{split}
 &f^\star_{D,\delta}(x,a)=
I\left(\eta_{a}(x)> \frac12+\frac{(2a-1)t^\star_{D,\delta}}{2p_a}\right)+ \tau_{D,\delta,a}^\star I\left(\eta_a(x)=\frac12+\frac{(2a-1)t^\star_{D,\delta}}{2p_a}\right),
\end{split}
\end{align}
where 
$t_{D,\delta}^\star$ is defined as
\begin{align}\label{t-star-dp}
\begin{split}
&\sup\left\{t:P_{X|A=1}\left(\eta_1(X)>\frac12+\frac{t}{2p_1}\right)>P_{X|A=0}\left(\eta_0(X)>\frac12-\frac{t}{2p_0}\right)+\frac{D^\star}{|D^\star|} \delta\right\};
\end{split}
\end{align}
and $(\tau_{D,\delta,1}^\star,\tau_{D,\delta,0}^\star)\in[0,1]^2$ are provided in the Appendix.
\end{compactitem}
\end{theorem}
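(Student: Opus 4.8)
The plan is to strip the problem down to a linear program in $f$ and then run a Neyman--Pearson/Lagrangian duality argument tailored to the two-group structure. For the objective, I would rewrite the misclassification rate as
$$P(Y\neq \widehat Y_f)=\E[\eta_A(X)]+\E\left[f(X,A)\bigl(1-2\eta_A(X)\bigr)\right],$$
using that, conditionally on $(X,A)=(x,a)$, predicting $1$ with probability $f(x,a)$ incurs error $f(x,a)(1-\eta_a(x))+(1-f(x,a))\eta_a(x)$. Since $\E[\eta_A(X)]$ does not depend on $f$, minimizing the error is equivalent to minimizing the linear functional $R(f):=\sum_{a}p_a\,\E_{X|A=a}[f(X,a)(1-2\eta_a(X))]$. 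The constraint is likewise linear: $\textup{DDP}(f)=\E_{X|A=1}[f(X,1)]-\E_{X|A=0}[f(X,0)]$, so $f\in\mathcal{F}_{D,\delta}$ means $|\textup{DDP}(f)|\le\delta$.

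The easy branch $|D^\star|\le\delta$ is immediate: some unconstrained Bayes-optimal $f^\star$ from Proposition~\ref{prop-ba-op} is already feasible, hence optimal within $\mathcal{F}_{D,\delta}$. For the binding branch I would assume, without loss of generality, $D^\star>0$ (the case $D^\star<0$ is symmetric under swapping the two groups, which flips the sign $D^\star/|D^\star|$ in \eqref{t-star-dp}), so the active constraint is $\textup{DDP}(f)\le\delta$. Introducing a multiplier $t\ge0$, I form the Lagrangian $R(f)+t\,(\textup{DDP}(f)-\delta)$ and regroup by group as
$$\int f(x,1)\bigl[p_1(1-2\eta_1(x))+t\bigr]\,dP_{X|A=1}+\int f(x,0)\bigl[p_0(1-2\eta_0(x))-t\bigr]\,dP_{X|A=0}-t\delta.$$
Minimizing pointwise over $f(x,a)\in[0,1]$ forces $f(x,1)=1$ exactly when $\eta_1(x)>\tfrac12+\tfrac{t}{2p_1}$ and $f(x,0)=1$ exactly when $\eta_0(x)>\tfrac12-\tfrac{t}{2p_0}$, with arbitrary values on the boundaries; this reproduces the thresholding form of \eqref{op-de-rule-dp} with $t=t^\star_{D,\delta}$, the boundary values playing the role of the randomization constants $\tau^\star_{D,\delta,a}$.

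Next I would calibrate the multiplier. The map $t\mapsto P_{X|A=1}(\eta_1(X)>\tfrac12+\tfrac{t}{2p_1})-P_{X|A=0}(\eta_0(X)>\tfrac12-\tfrac{t}{2p_0})$ is the DDP of the hard-threshold rule; it is nonincreasing in $t$, and its left limit at $t=0$ is exactly $P_{X|A=1}(\eta_1\ge\tfrac12)-P_{X|A=0}(\eta_0>\tfrac12)=D^\star$. Hence when $D^\star>\delta$ the supremum in \eqref{t-star-dp} is attained at some finite $t^\star_{D,\delta}\ge0$, consistent with the sign required of a multiplier for an active upper constraint. With the multiplier fixed, optimality follows from the Neyman--Pearson inequality: since $f^\star_{D,\delta}$ minimizes the Lagrangian pointwise, for every feasible $f$,
$$R(f)-R(f^\star_{D,\delta})\ge t^\star_{D,\delta}\bigl(\textup{DDP}(f^\star_{D,\delta})-\textup{DDP}(f)\bigr)=t^\star_{D,\delta}\bigl(\delta-\textup{DDP}(f)\bigr)\ge0,$$
provided $\textup{DDP}(f^\star_{D,\delta})=\delta$ (complementary slackness) together with $t^\star_{D,\delta}\ge0$ and $\textup{DDP}(f)\le\delta$.

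The main obstacle, and the reason the statement carries the second (randomization) term, is establishing that complementary slackness $\textup{DDP}(f^\star_{D,\delta})=\delta$ can be met exactly. Because no distributional assumption is imposed on $X$, the variables $\eta_a(X)$ may place atoms on the thresholds $\tfrac12\pm\tfrac{t^\star_{D,\delta}}{2p_a}$, so the hard-threshold DDP can jump over the target $\delta$ as $t$ crosses $t^\star_{D,\delta}$. I would therefore show that the left and right limits of the DDP straddle $\delta$ at $t^\star_{D,\delta}$, and then solve for $(\tau^\star_{D,\delta,1},\tau^\star_{D,\delta,0})\in[0,1]^2$ so that assigning probability $\tau^\star_{D,\delta,a}$ to the boundary set interpolates the jump and yields $\textup{DDP}(f^\star_{D,\delta})=\delta$ precisely. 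The delicate points are verifying that such boundary weights exist in $[0,1]$, giving their explicit values (deferred to the Appendix in the statement), and checking that any choice on the boundary still minimizes the Lagrangian, so that the Neyman--Pearson inequality above remains valid; together these close the argument.
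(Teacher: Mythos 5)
Your proposal is correct and follows essentially the same route as the paper's own proof: the identical reduction of the risk and of $\textup{DDP}(f)$ to linear functionals of $f$, the same restriction to the one-sided constraint with the key sign fact $t^\star_{D,\delta}\ge 0$ established via monotonicity of the hard-threshold DDP, and the same resolution of the atom/boundary issue by choosing $(\tau^\star_{D,\delta,1},\tau^\star_{D,\delta,0})\in[0,1]^2$ so that the constraint is met with equality, followed by the observation that the constructed classifier is feasible for the two-sided constraint while optimal over the larger one-sided class. The only difference is presentational: the paper invokes the generalized Neyman--Pearson lemma (Lemma \ref{NP_lemma}, cited from \cite{Shao2003}) as a black box, whereas you unwind that lemma into its standard complementary-slackness proof, $R(f)-R(f^\star_{D,\delta})\ge t^\star_{D,\delta}\bigl(\textup{DDP}(f^\star_{D,\delta})-\textup{DDP}(f)\bigr)\ge 0$, which is the same underlying argument.
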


\begin{remark}
Note that when $\eta_1(X)$ and $\eta_0(X)$ have  density functions on $[0,1]$, we have $P_{X|A=1}(\eta_1(X)=\frac12+\frac{t^\star_{D,\delta
}}{2p_1})=P_{X|A=0}(\eta_0(X)=\frac12-\frac{t^\star_{D,\delta
}}{2p_0})= 0$ and the optimal classifier is deterministic:
\begin{align}\label{op-de-rule-dp-degen}
\begin{split}
 &f^\star_{D,\delta}(x,a)=
I\left(\eta_{a}(x)> \frac12+\frac{(2a-1)t^\star_{D,\delta}}{2p_a}\right).
\end{split}
\end{align}
\end{remark}

\begin{figure}[t]
\begin{center}
\centerline{\includegraphics[width=1\columnwidth]{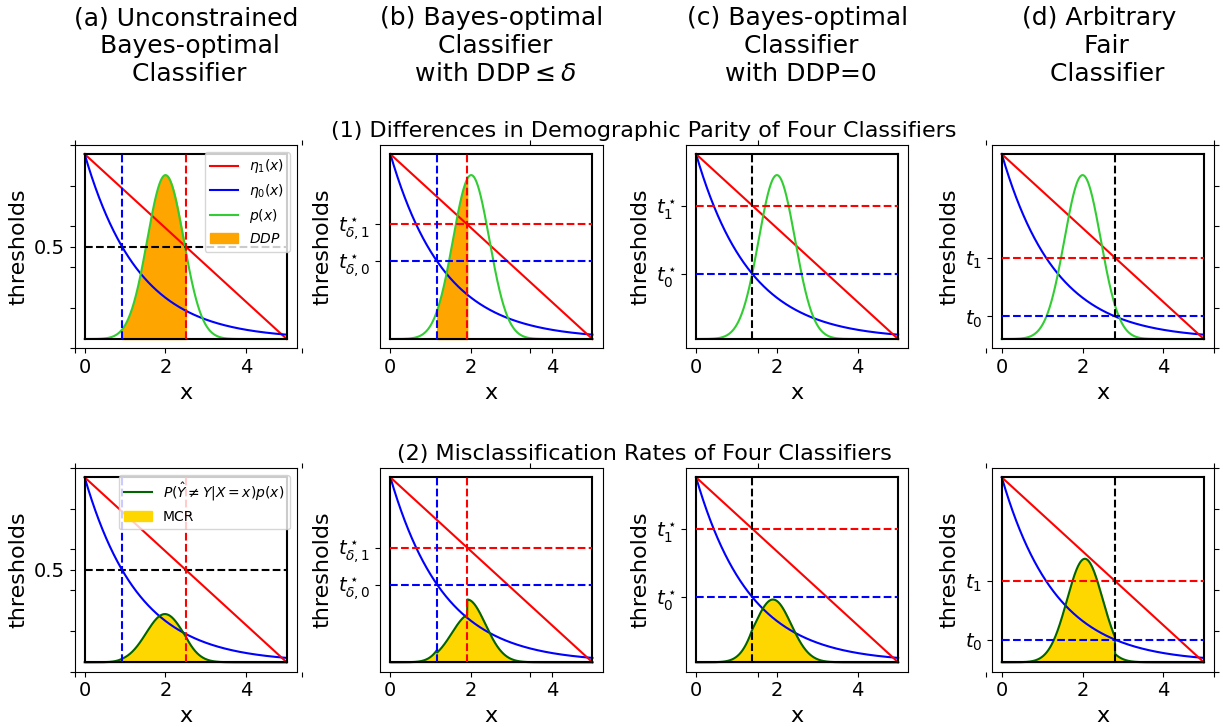}}
\caption{Difference in demographic parity (DDP) and misclassification rate (MCR) for four classifiers. 
Panel (a): the unconstrained Bayes-optimal classifier. 
Panel (b): a $\delta$-fair Bayes-optimal classifier.
Panel (c): fair Bayes-optimal classifier. 
Panel (d): non-optimal fair classifier. In the plots, the red and blue lines are the conditional probabilities of $Y=1$ in the two groups; the horizontal red and blue dashed lines are the thresholds for the groups (classifying as $1$ for $x$ less than a vertical dashed line); 
the light green curve is $x\mapsto p(x)$, the density function of $X$; 
the dark green curve is $x\mapsto p(x)P(\widehat{Y}\neq Y|X=x)$, whose integral is the misclassification rate.
The DDP and misclassification rate are
given by the shaded area in orange and yellow, respectively.}
\label{optimal-illustration}
\end{center}
\vskip -0.2in
\end{figure}

Figure \ref{optimal-illustration} illustrates our fair Bayes-optimal classifier. In the plot, the intersection points of any vertical line with two conditional probability densities of $Y=1$ for the two groups can induce the thresholds of a fair classifier, as shown in panel (d). 
The mis-classification rate is minimized for the thresholds from \eqref{t-star-dp}, as shown in panel (c).

Several comments for Theorem \ref{thm-opt-dp}  are in order. First, similarly to the unconstrained case, the fair Bayes-optimal classifier is a group-wise thresholding rule. Clearly, the accuracy is maximized by choosing the most likely set in each group. 
Second, mitigating disparity over protected groups necessitates shifting the thresholds. We need to ``balance" the thresholds for the two groups, by increasing the threshold for the group with a higher proportion classified as ``$1$'', thus bringing the proportions classified as ``$1$'' closer (and similarly for the ``$0$'' class). Moreover, the shift of each group depends on its size, since smaller shifts from the Bayes-optimal classifier maintain higher accuracy.

\section{FairBayes: Group-wise thresholding fair Bayes-optimal  classifier}
In this section, we propose the simple FairBayes post-processing algorithm (Algorithm \ref{alg:FBC1}), inspired by the  fair Bayes-optimal classifier. 
We consider the deterministic classifier \eqref{op-de-rule-dp-degen} and focus on learning the optimal thresholds $t^\star_{D,\delta}$. 
We observe data points $(x_i,a_i,y_i)_{i=1}^n$ drawn independently and identically from a distribution $\mathcal{D}$ over the domain
$\mathcal{X} \times \mathcal{A} \times \mathcal{Y}$. 

\textbf{Step 1.}
In the first training step, we apply standard machine learning algorithms to learn the feature-conditional probability of $Y=1$ for each protected group. Consider the function class $\mathcal{F}=\{f_\theta: \theta\in\Theta \}$ parametrized by $\theta$. 
Consider a loss function $L(\cdot, \cdot)$, for instance, $L_2$ or cross-entropy loss\footnote{Here we do not use the 0-1 loss, as minimizing the empirical 0-1 risk is generally not tractable. 
At the population level,
the minimizers of the risks induced by the 0-1, $L_2$ and cross-entropy losses are all equal to the
true feature-conditional probability density function of the label \citep{miller1993loss}.}.
The estimator $\widehat{\eta}$ of $\eta$, with $\eta(x,a)=\eta_a(x)$, is constructed by aiming to minimize the empirical risk\footnote{The empirical risk may be non-convex and hard to minimize. However, the main goal of this step is to construct an estimator $\widehat{\eta}$ of $\eta$; and any algorithm that achieves this is suitable. Here we choose empirical risk minimization due to its wide applicability.}
$
\widehat{\eta}\in \underset{f_\theta\in\mathcal{F}}{\text{argmin}}\frac1{n}\sum_{i=1}^{n}L(y_{i},f_{\theta}(x_{i},a_i))$.

\begin{algorithm}[tb]
   \caption{FairBayes under demographic parity}
   \label{alg:FBC1}
\begin{algorithmic}

   \STATE {\bfseries Input:} Tolerance level $\delta$. Dataset $S=S_1\cup S_0$ with  $S=\{x_{i},a_i,y_{i}\}_{i=1}^{n}$, $S_1=\{x_{1,i},y_{1,i}\}_{i=1}^{n_1}$ and $S_0=\{x_{0,i},y_{0,i}\}_{i=1}^{n_0}$.

   \STATE {\textbf{Step 1}:} Learn $\eta$:  $ \quad    \widehat{\eta}\in \underset{f_\theta\in\mathcal{F}}{\text{argmin}}\frac1{n}\sum_{i=1}^{n}L(y_{i},f_{\theta}(x_{i},a_i)).$

       \STATE {\textbf{Step 2}:} Find the optimal thresholds:
     \STATE{\quad Let $\widehat{D}(t)=\frac1{n_1}\sum_{i=1}^{n_1}{f}^t(x_{1,i},1) -\frac1{n_0}\sum_{i=1}^{n_0}{f}^t(x_{0,i},0)$, for $f^t$ from \eqref{ft},}

  \IF{$|\widehat{D}({0})|\leq \delta$}\quad
 $\widehat{t}_{D,\delta}=0$;
         
  \ELSIF{$\widehat{D}({0})> \delta$}\quad
   $\widehat{t}_{D,\delta}=\sup\left\{t :\widehat{D}(t)>\delta\right\}$;
         
   \ELSE\quad
   $\widehat{t}_{D,\delta}=\sup\left\{t :\widehat{D}(t)>-\delta\right\}$;
         
\ENDIF

\STATE {\bfseries Output:}  $\widehat{f}_{D,\delta}$, with
$\widehat{f}_{D,\delta}(x,a)=
I\left(\widehat\eta_{a}(x)> \frac12+\frac{n\widehat{t}_{D,\delta}}{2(2a-1)n_a}\right).$
\end{algorithmic}
\end{algorithm}
\textbf{Step 2.}
In the second step, we estimate the threshold for each protected group by solving the one-dimensional fairness constraint. As an example, we consider the constraint on demographic parity, $|\text{DDP}|\le \delta$. We also present algorithms for other group fairness measures in the Appendix. We divide the data into two parts,
according to the value of $A$:
$S_1=\{x_{1,i},y_{1,i}\}_{i=1}^{n_1}$, where $a_{1,i}=1$ and $S_0=\{x_{0,i},y_{0,i}\}_{i=1}^{n_0}$, where $a_{0,i}=0$. Based on Theorem~\ref{thm-opt-dp}, the following group-wise thresholding rule is considered:\footnote{We assume without further mention that we only divide by nonzero quantities. This can usually be ensured by restricting the range of parameters (e.g., $t$) considered. If not (e.g., if $n_0=0$ or $n_1=0$), our algorithm exits with an appropriate error message.}
\begin{align}\label{ft}
{f}^t(x,a)&=
I\left(\widehat\eta_{a}(x)> \frac12+\frac{n{t}}{2(2a-1)n_a}\right),
\end{align}
where $\eta_a$ and $p_a$ in \eqref{op-de-rule-dp} are replaced by their empirical (here, plug-in) estimators. 
Now, our goal is to construct an estimate $\widehat{t}_\delta$ such that the proposed classifier approximately satisfies the fairness constraint. 
{We 
define $\widehat{D}(t)$, an estimator of the unfairness measure $\text{DDP}(f^t)$, as
$$\widehat{D}(t)
=\frac1{n_1}\sum_{i=1}^{n_1}{f}^t(x_{1,i},1) 
-\frac1{n_0}\sum_{i=1}^{n_0}{f}^t(x_{0,i},0)
.$$}

When $|\widehat{D}(0)|\le\delta$, we set $\widehat{t}_{D,\delta}=0$, as---in the corresponding population case---the fairness constraint is satisfied. 
However, when $|\widehat{D}(0)|>\delta$, we follow the definition of $t^\star_{D,\delta}$ in \eqref{t-star-dp} to estimate it by
  $\widehat{t}_{D,\delta}=\sup\left\{t :\widehat{D}(t)>\widehat{D}({0})\delta/|\widehat{D}({0})|\right\}.$
Since $\widehat{D}(t)$ is monotone non-increasing as a function of $t$,  this can be effectively computed via either grid search or the bisection method.
Our final FairBayes estimator of the fair Bayes-optimal classifier $\widehat{f}_{D,\delta}$ outputs for all $x\in\X,a\in\A$,
$$\widehat{f}_{D,\delta}(x,a)=
\widehat{f}^{\,\hat{t}_{D,\delta}}(x,a)=
I\left(\widehat\eta_{a}(x)> \frac12+\frac{n\widehat{t}_{D,\delta}}{2(2a-1)n_a}\right).$$

FairBayes has several notable advantages. First, it is extremely easy to implement. In the first training step,
virtually all machine learning algorithms are suitable as no constraint is present. In the second step, the level of unfairness is directly specified and hyper-parameter tuning is not required. In addition, we will empirically show that this simple method can control disparity effectively. 
Second, as the FairBayes algorithm is inspired by the fair Bayes-optimal classifier, it is also highly accurate, as shown in our experiments. 
 
\vspace{-2mm}
\section{Experiments}
\vspace{-2mm}
\subsection{Synthetic data}
We first study a simple synthetic  dataset, in which the theoretical fair Bayes-optimal classifier can be derived explicitly to compare FairBayes with the theoretical benchmark.

\textbf{Data generating process.}
For a positive integer dimension $p$, let $X=(X_1,X_2,...,X_p)\in \mathbb{R}^p$ a generic feature, $A\in\{0,1\}$ be the protected attribute and $Y\in\{0,1\}$ be the label. 
Recalling the notations from the end of Section \ref{Pre_and _Not}, we generate $A$ and $Y$ according to the probabilities $p_1=0.7$, $p_{Y,1}=0.7$ and $p_{Y,0}=0.4$. Conditional on $A=a$ and $Y=y$, we generate $X$ from a multivariate
Gaussian distribution $N(\mu_{ay},\sigma^2I_p)$ (here $I_p$ is the $p$-dimensional identity covariance matrix). The entries of $\mu_{ay}$ are sampled from $\mu_{ay,j}\sim U(0,1)$, $j=1,\ldots,p$, where $U(0,1)$ is the uniform distribution over $[0,1]$, and $\sigma^2$ controls the variability  of the feature entries. 
Under the Gaussian setting,  both $\eta_1$ and $\eta_0$ have closed forms. As a result, we can calculate the $\delta$-fair Bayes-optimal classifier numerically using Theorems \ref{thm-opt-dp} and \ref{thm-opt-EO}. 

 \textbf{Experimental settings.} To evaluate our FairBayes algorithm, we randomly generate $20,000$ training data points  and $5,000$ test data points. 
  In step 1,  we employ logistic regression  to learn $\eta_1(\cdot)$ and $\eta_0(\cdot)$; as in the Gaussian case, the Bayes-optimal classifier is linear in $x$.
In step 2, we consider different  pre-determined unfairness levels $\delta$\footnote{We use $\delta_D$ for demographic parity and $\delta_E$ for equality of opportunity.} and  apply the bisection method to solve the equation for the thresholds. We denote by $\widehat{f}_{D,\delta}$ and $\widehat{f}_{E,\delta}$ the predictors obtained, aiming for Bayes optimality constrained with demographic parity and equality of opportunity, respectively. The empirical accuracy and unfairness measures are evaluated on the test set.


\begin{table}
\caption{Classification accuracies and levels of disparity for the fair Bayes-optimal classifier and FairBayes using logistic regression.}
\label{table_ddp}
\begin{center}
\setlength{\tabcolsep}{5.4pt}
\renewcommand{\arraystretch}{0.95}
\begin{small}
\begin{sc}
\begin{tabular}{cc|cc|cc|cc}
\hline
\multicolumn{4}{c|}{Demographic Parity}  & \multicolumn{4}{c}{Equality of Opportunity} \\\hline
\multicolumn{2}{c|}{Theoretical}  & \multicolumn{2}{c|}{FairBayes} &
\multicolumn{2}{c|}{Theoretical}& \multicolumn{2}{c}{FairBayes} \\ \hline
$\delta_D$& $\text{ACC}_{D,\delta}$ & DDP & $\text{ACC}_{D,\delta}$ &  $\delta_E$& $\text{ACC}_{E,\delta}$& DEO& $\text{ACC}_{E,\delta}$ \\ 
\hline
0.00 & 0.735 & 0.013(0.010) & 0.735(0.006) & 0.00 & 0.781  & 0.013 (0.010) & 0.779 (0.005) \\
0.10 & 0.755 & 0.100(0.016) & 0.755(0.006) & 0.04 & 0.781 & 0.043 (0.017) & 0.783 (0.006) \\ 
0.20 & 0.772 & 0.199(0.016) & 0.770(0.006) & 0.08 & 0.787 & 0.084 (0.018)& 0.785 (0.006)\\
0.30 & 0.782 & 0.299(0.015) & 0.781(0.006) &0.12 & 0.787 & 0.124(0.018) &  0.786(0.006)\\\hline\end{tabular}
\end{sc}
\end{small}
\end{center}
\end{table}


\begin{figure}
\begin{center}
\centerline{\includegraphics[width=0.95\columnwidth]{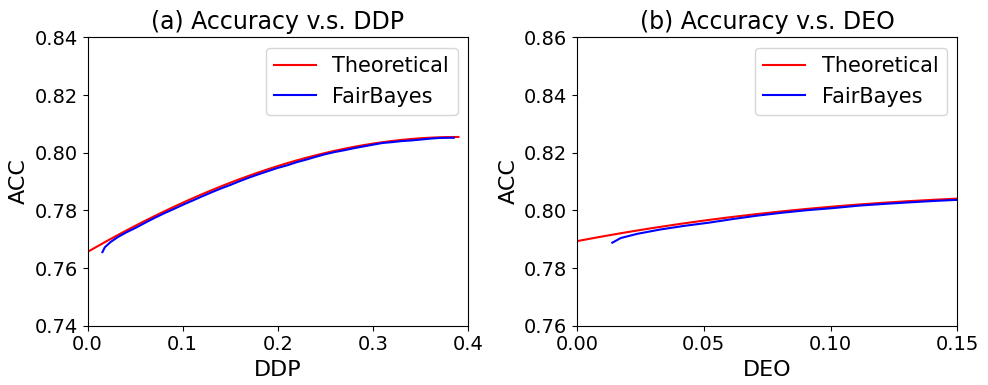}}
\caption{Fairness-Accuracy tradeoff of our classifier and the fair Bayes-optimal classifier with $p=10$, $\sigma=1$.  Panel (a): Tradeoff between accuracy and demographic parity. 
Panel (b): Tradeoff between accuracy and equality of opportunity parity.}
\label{plt-tradeoff-simu}
\vskip -0.2in
\end{center}
\end{figure}

Tables \ref{table_ddp} presents the test accuracies and disparity levels of the $\delta$-fair Bayes-optimal classifier and our estimator based on 100 random simulations\footnote{ Here, the randomness of the experiment is due to the random generation of the synthetic data.}. We present the result for $p=10,\sigma=1$; while the result for $p=2,\sigma=0.5$ is deferred to the Appendix. As shown in  Table \ref{table_ddp}, FairBayes controls 
 the pre-determined unfairness level $\delta$ as desired.
 We further present the fairness-accuracy tradeoff of the $\delta$-fair Bayes-optimal classifier and FairBayes in Figure \ref{plt-tradeoff-simu}. Our classifier closely tracks the behavior of the optimal one.

\subsection{Empirical Data Analysis}
{\bf Data Description.} Following \cite{CHS2020}, we consider three benchmark datasets in our experiments:  ``Adult'', ``COMPAS'' and ``Law School Admissions''. In \cite{CHS2020}, data were randomly split into training and test sets.
In our experiments, we further randomly split the training set into a training part (80\%) and a validation part (20\%). We present the results for the Adult dataset in the main text and the others in the Appendix. The target variable $Y$ is whether the income of an individual is more than \$50,000.  Age, marriage status, education level and other related variables are included in $X$, and the protected attribute $A$ refers to gender. 

{\bf Compared algorithms.}
\cite{CHS2020} introduced a kernel density estimation (KDE)-based constrained optimization method.
Their estimator of fairness measures is a differentiable function with respect to model parameters.
They further empirically demonstrated the superiority of their method over other baseline methods. Moreover, \cite{WQK2020}  designed a domain-independence based training technique, where a shared feature representation is learned for both subgroups. 
In their experiments, their method outperformed several baseline algorithms.
Our comparisons are mainly based on these two promising algorithms.  
\cite{I2020} proposed an  post-processing unconstrained optimization (PPUO) method for fair classification. We include their method as it is a  post-processing algorithm that is related  to ours.
Additionally, we also include other baseline methods such as domain discriminative training \citep{WQK2020}  and adversarial training \citep{ZLM2018}. 

{\bf Experimental settings.}  We follow the training settings in \cite{WQK2020}. For all datasets, a two-layer fully connected neural network with 32 hidden neurons is trained with the Adam optimizer with $(\beta_1,\beta_2)=(0.9,0.999)$, the default hyperparameters. 
For adversarial training \cite{ZLM2018}, we further use a linear classifier as the discriminator. In all cases, we train the model on the training set and select the one with best performance on the validation set. All  experiments use PyTorch; we refer readers to the Appendix for more training details, including learning rates, batch sizes and training epochs.  We repeat the experiments 20 times\footnote{The randomness of the experiments comes from the 
stochasticity of the batch selection in the optimization algorithm.}  for all the datasets.


We  first evaluate the FairBayes algorithm with various pre-determined levels of disparity. We present the simulation results in Table \ref{tab1}. We observe that FairBayes controls the disparity level at the pre-determined values, as desired.

\begin{table}[t]
 \caption{DDP of FairBayes with predetermined unfairness levels.}
 \label{tab1}
 \vskip -0.1in
 \begin{center}
 \begin{small}
 \begin{sc}
 \begin{tabular}{c|cccc}\hline
 $\delta_D $ & 0.00 & 0.04 & 0.08 & 0.12 \\\hline
DDP   &0.003 (0.003) & 0.040(0.004) & 0.078(0.004)  &0.116(0.002) \\\hline
\end{tabular}
 \end{sc}
\end{small}
 \end{center}
 \end{table}

We then compare our FairBayes algorithm with other baseline methods in Table \ref{table_adult}.  
As we can see, FairBayes is the most effective one for disparity control. 
It also has a satisfactory accuracy. 
Although slightly less accurate than other methods such as  post-processing unconstrained optimization and KDE based optimization, the discrepancy is almost negligible and perfect fairness comes at the cost of accuracy. Moreover, this discrepancy disappears when we enlarge the pre-determined unfairness level, as we show next.

\begin{table}[t]
 \caption{Classification accuracy and level of unfairness on the Adult dataset (Here, domain based algorithms and PPUO are designed for demographic parity).}
 \label{table_adult}
 \vskip -0.1in
 \begin{center}
 \begin{small}
 \begin{sc}
 \begin{tabular}{c|c|cc|cc}\hline
  \multicolumn{2}{c|}{}  &\multicolumn{2}{c|}{Demographic Parity}  & \multicolumn{2}{c}{Equality of Opportunity} \\\hline
 method &   parameter &ACC$_D$ & DDP & ACC$_E$& DEO\\\hline
 FairBayes & $\delta=0 $&0.832 (0.002) & 0.003 (0.003)& 0.849 (0.001)& 0.010 (0.007)\\
KDE based &$\lambda=0.75$&0.840 (0.004) &	0.054 (0.019)&	0.838 (0.010)&	0.032 (0.026)\\
Adversarial&$\alpha=3$& 0.784 (0.028) &	 0.072 (0.085) &0.756 (0.051)&	 0.068 (0.037)\\
PPUO && 0.848 (0.002)     & 0.110 (0.009)    &      &  
 \\
 Domain dis & & 0.828(0.009)	&0.032(0.025) &&	\\
 Domain ind& & 0.839 (0.008) &	0.077 (0.034)& &	\\
\hline
\end{tabular}
 \end{sc}
\end{small}
 \end{center}
 \vskip -0.1in
 \end{table}

To  further validate our FairBayes method, we compare its fairness-accuracy tradeoff with that of two baseline methods, KDE-based constrained optimization and adversarial-based training. 
For our FairBayes method, the level of unfairness is directly controlled. The range is from $0$ to the empirical DDP (or DEO) 
of the unconstrained classifier.
In constrained optimization, fairness and accuracy are balanced through a tuning parameter that controls the ratio between the loss and the fairness regularization term. 
We let this tuning parameter  $\lambda$ vary from $0.05$ to $0.95$ to explore a wide range of the tradeoff. 
In adversarial training, the tradeoff is controlled by changing the parameter $\alpha$ that handles the gradient of the discriminator. 
We vary this parameter from $0$ to $5$, as we empirically
find that in this range, the performance is representative and suffices for comparison (empirically, outside this range the accuracy  can drop quickly). More details about the effects of $\lambda$ and $\alpha$ can be found in \cite{WQK2020} and \cite{ZLM2018}, respectively.

Figure  \ref{plt-tradeoff-adult}  presents the fairness-accuracy tradeoff with respect to DDP (left panel) and DEO (right panel) evaluated on the Adult dataset. 
In the plot, each point represents a particular tuning parameter. 
Our FairBayes algorithm dominates other methods on this dataset. KDE-based optimization also achieves a satisfactory fairness-accuracy tradeoff.
However, it may lose some accuracy, 
due to its density estimation step, 
and its use of a  Huber surrogate loss to handle the non-differentiability of the absolute value function. 
With careful design, adversarial training can reduce the unfairness measure. However, its performance is unstable and the accuracy may drop quickly while improving the fairness measure. 

 In summary, our FairBayes algorithm achieves a  better accuracy-fairness tradeoff. 
 In addition, we emphasize its \emph{computational efficiency}. 
 For the other two methods, the tradeoff between fairness and accuracy is achieved through extensive training. 
 In contrast, our FairBayes method only necessitates a single training run, and the tradeoff can be controlled in a very efficient way. 
 For the Adult dataset,  FairBayes  takes only around 67 seconds to generate a DDP-accuracy tradeoff curve with 50 different disparity levels ({on a personal computer with an Intel(R) Core(TM) i9-9920X CPU @ 3.50Ghz and an NVIDIA GeForce RTX 2080 Ti GPU}). 
 However, it takes 1415 seconds for KDE-based optimization and 2279 seconds for adversarial training to derive a DDP-accuracy tradeoff curve with only 10 different tuning knobs ($\lambda$ for KDE-based optimization and $\alpha$ for adversarial training). Our method is two orders of magnitude faster, while achieving a better tradeoff.


\begin{figure}
\begin{center}
 \centerline{\includegraphics[width=0.95\columnwidth]{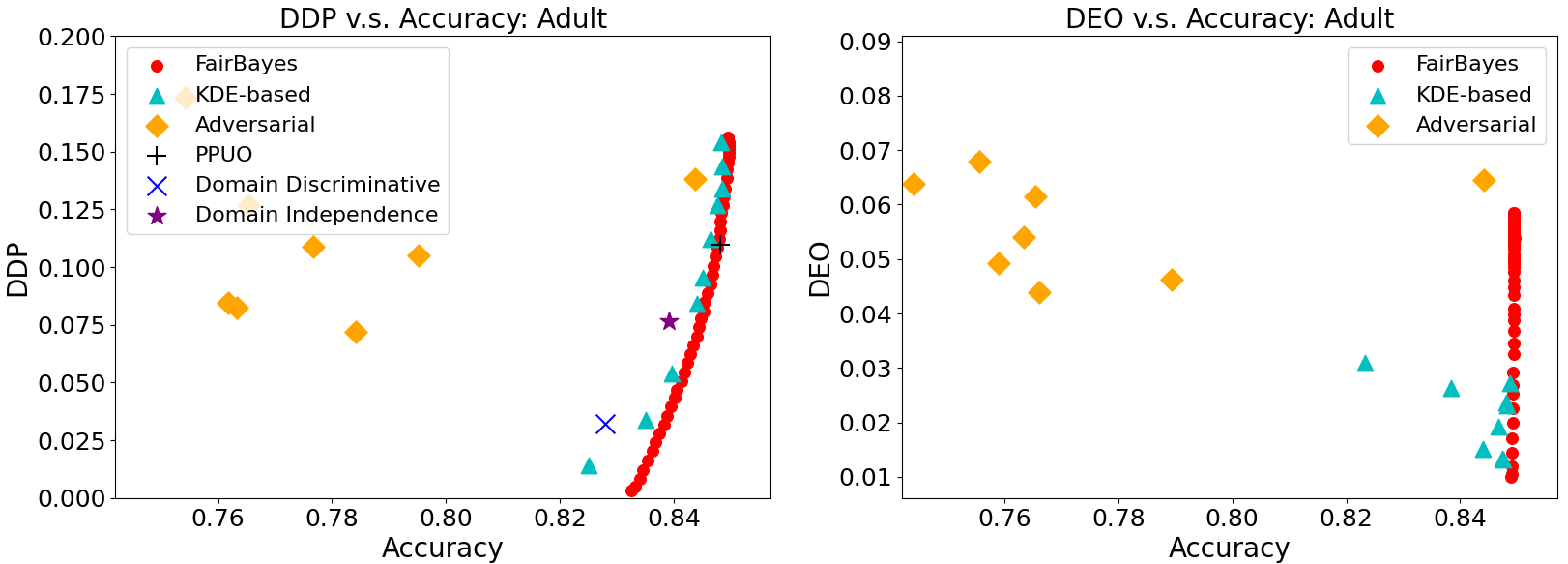}}
 \caption{Fairness-Accuracy tradeoff on ``Adult'' dataset. Left panel: Tradeoff with respect to demographic parity. Right panel: Tradeoff with respect to Equality of opportunity.}
 \label{plt-tradeoff-adult}
 \end{center}
  \vskip -0.3in
 \end{figure}

\vspace{-2mm}
\section{Summary and Discussion}
\vspace{-2mm}\label{sec:dis}
In this paper, we study Bayes-optimal classifiers under group fairness constraints.
Based on the classical Neyman-Pearson  argument in optimal hypothesis testing, we theoretically derive fair Bayes-optimal classifiers under group fairness constraints.
We then design the FairBayes post-processing algorithm for fair classification. 
FairBayes has several advantages. 
First, as FairBayes is based on a clear theoretical optimality principle, it can empirically achieve a superior fairness-accuracy tradeoff. 
Second, FairBayes is based on post-processing, thus the disparity can be controlled directly and precisely. Third, as the constraints are handled only in the second step where a one-dimensional constraint is solved, FairBayes is fast, especially when various possible fairness constraints are simultaneously considered (and the predictive model only needs to be trained once). 

This work provides many appealing directions for follow-up research.
 First, this paper only considers binary classification;
 the multi-class extension would be a significant problem. 
Second, based on our theoretical result, 
we demonstrate that a simple post-processing algorithm can achieve outstanding performance. As a result, it is of great interest to study how our theory can be combined with other algorithms. 
For instance, the optimal thresholds we derived may be used to find the optimal weights for re-weighting algorithms aiming to achieve certain fairness conditions \citep{KC2012}. 
In addition, the theory can serve as a guide for generating fair synthetic data. 

\section{Acknowledgements}

We are grateful to Robert C. Williamson and Nicolas Schreuder for providing the valuable references \cite{MW2018, SC2021}.



\bibliographystyle{plain} 
\bibliography{fair_optimal}

\begin{thebibliography}{10}

\bibitem{I2020}
Ibrahim Alabdulmohsin.
\newblock Fair classification via unconstrained optimization, 2020.

\bibitem{BS2016}
S.~Barocas and A.~D. Selbst.
\newblock Big data's disparate impact.
\newblock {\em Calif. L. Rev.}, 104:671, 2016.

\bibitem{BHJR2021}
H.~Berk, R.and~Heidari, M.~Jabbari, S.and~Kearns, and A.~Roth.
\newblock Fairness in criminal justice risk assessments: The state of the art.
\newblock {\em Sociological Methods \& Research}, 50(1):3--44--59, 2021.

\bibitem{CFM2009}
Toon Calders, Faisal Kamiran, and Mykola Pechenizkiy.
\newblock Building classifiers with independency constraints.
\newblock In {\em 2009 IEEE International Conference on Data Mining Workshops},
  pages 13--18, 2009.

\bibitem{CFDV2017}
Flavio Calmon, Dennis Wei, Bhanukiran Vinzamuri, Karthikeyan
  Natesan~Ramamurthy, and Kush~R Varshney.
\newblock Optimized pre-processing for discrimination prevention.
\newblock In {\em Advances in Neural Information Processing Systems},
  volume~30. Curran Associates, Inc., 2017.

\bibitem{SC2020}
Simon Caton and Christian Haas.
\newblock Fairness in machine learning: A survey.
\newblock {\em arXiv preprint arXiv:2010.04053}, 2020.

\bibitem{CLKVN2019}
L.~Elisa Celis, Lingxiao Huang, Vijay Keswani, and Nisheeth~K. Vishnoi.
\newblock Classification with fairness constraints: A meta-algorithm with
  provable guarantees.
\newblock {\em Proceedings of the Conference on Fairness, Accountability, and
  Transparency}, 2019.

\bibitem{LV2019}
L.~Elisa Celis and Vijay Keswani.
\newblock Improved adversarial learning for fair classification.
\newblock {\em arXiv preprint arXiv:1901.10443}, 2019.

\bibitem{CHS2020}
Jaewoong Cho, Gyeongjo Hwang, and Changho Suh.
\newblock A fair classifier using kernel density estimation.
\newblock In {\em Advances in Neural Information Processing Systems},
  volume~33, pages 15088--15099, 2020.

\bibitem{CCH02019}
Evgenii Chzhen, Christophe Denis, Mohamed Hebiri, Luca Oneto, and Massimiliano
  Pontil.
\newblock Leveraging labeled and unlabeled data for consistent fair binary
  classification.
\newblock In {\em Advances in Neural Information Processing Systems},
  volume~32. Curran Associates, Inc., 2019.

\bibitem{CSFG2017}
Sam Corbett-Davies, Emma Pierson, Avi Feller, Sharad Goel, and Aziz Huq.
\newblock Algorithmic decision making and the cost of fairness.
\newblock In {\em Proceedings of the 23rd ACM SIGKDD International Conference
  on Knowledge Discovery and Data Mining}, pages 797--806. Association for
  Computing Machinery, 2017.

\bibitem{CJG2019}
A.~Cotter, M.~R. Jiang, H.and~Gupta, S.~Wang, T.~Narayan, S.~You, and
  K.~Sridharan.
\newblock Optimization with non-differentiable constraints with applications to
  fairness, recall, churn, and other goals.
\newblock {\em Journal of Machine Learning Research}, 20(172):1--59, 2019.

\bibitem{CMJW2019}
Elliot Creager, David Madras, Joern-Henrik Jacobsen, Marissa Weis, Kevin
  Swersky, Toniann Pitassi, and Richard Zemel.
\newblock Flexibly fair representation learning by disentanglement.
\newblock In {\em Proceedings of the 36th International Conference on Machine
  Learning}, volume~97 of {\em Proceedings of Machine Learning Research}, pages
  1436--1445. PMLR, 2019.

\bibitem{DevroyeGL96}
Luc Devroye, L\'{a}szl\'{o} Gy\v{o}rfi, and G\'{a}bor Lugosi.
\newblock {\em A Probabilistic Theory of Pattern Recognition}, volume~31 of
  {\em Stochastic Modelling and Applied Probability}.
\newblock Springer, 1996.

\bibitem{DHPT2012}
Cynthia Dwork, Moritz Hardt, Toniann Pitassi, Omer Reingold, and Richard Zemel.
\newblock Fairness through awareness.
\newblock In {\em Proceedings of the 3rd Innovations in Theoretical Computer
  Science Conference}, ITCS '12, pages 214--226, 2012.

\bibitem{C2001CS}
Charles Elkan.
\newblock The foundations of cost-sensitive learning.
\newblock In {\em In Proceedings of the Seventeenth International Joint
  Conference on Artificial Intelligence}, pages 973--978, 2001.

\bibitem{FFMS2015}
Michael Feldman, Sorelle~A. Friedler, John Moeller, Carlos Scheidegger, and
  Suresh Venkatasubramanian.
\newblock Certifying and removing disparate impact.
\newblock In {\em Proceedings of the 21th ACM SIGKDD International Conference
  on Knowledge Discovery and Data Mining}, pages 259--268. Association for
  Computing Machinery, 2015.

\bibitem{BJA2016}
Benjamin Fish, Jeremy Kun, and {\'{A}}d{\'{a}}m~D{\'{a}}niel Lelkes.
\newblock A confidence-based approach for balancing fairness and accuracy.
\newblock In {\em Proceedings of the 2016 {SIAM} International Conference on
  Data Mining, Miami, Florida, USA, May 5-7, 2016}, pages 144--152. {SIAM},
  2016.

\bibitem{GCAM2016}
Gabriel Goh, Andrew Cotter, Maya Gupta, and Michael~P Friedlander.
\newblock Satisfying real-world goals with dataset constraints.
\newblock In {\em Advances in Neural Information Processing Systems},
  volume~29. Curran Associates, Inc., 2016.

\bibitem{HPS2016}
Moritz Hardt, , Eric Price, and Nati Srebro.
\newblock Equality of opportunity in supervised learning.
\newblock In {\em Advances in Neural Information Processing Systems},
  volume~29, 2016.

\bibitem{WH2014}
White House.
\newblock Big data: Seizing opportunities, preserving values, 2014.

\bibitem{WH2016}
White House.
\newblock Big data: A report on algorithmic systems, opportunity, and civil
  rights, 2016.

\bibitem{JL2019}
J.~E. Johndrow and K.~Lum.
\newblock An algorithm for removing sensitive information: application to
  race-independent recidivism prediction.
\newblock {\em The Annals of Applied Statistics}, 13(1):189--220, 2019.

\bibitem{JKMR2016}
Matthew Joseph, Michael Kearns, Jamie~H Morgenstern, and Aaron Roth.
\newblock Fairness in learning: Classic and contextual bandits.
\newblock In {\em Advances in Neural Information Processing Systems},
  volume~29. Curran Associates, Inc., 2016.

\bibitem{JJSL2016}
Surya~Mattu Julia~Angwin, Jeff~Larson and Lauren Kirchner.
\newblock Machine bias there's software used across the country to predict
  future criminals. and it's biased against blacks, 2016.

\bibitem{KC2012}
F.~Kamiran and T.~Calders.
\newblock Data preprocessing techniques for classification without
  discrimination.
\newblock {\em Knowledge and Information Systems}, 33(1):1--33, 2012.

\bibitem{KAS2012}
T.~Kamishima, S.~Akaho, H.~Asoh, and J.~Sakuma.
\newblock {Fairness-aware classifier with prejudice remover regularizer}.
\newblock In {\em Joint European Conference on Machine Learning and Knowledge
  Discovery in Databases}, pages 35--50. Springer, 2012.

\bibitem{PKG2019}
Preethi Lahoti, Krishna~P. Gummadi, and Gerhard Weikum.
\newblock ifair: Learning individually fair data representations for
  algorithmic decision making.
\newblock In {\em 35th {IEEE} International Conference on Data Engineering,
  {ICDE} 2019, Macao, China, April 8-11, 2019}, pages 1334--1345. {IEEE}, 2019.

\bibitem{NARBSB2019}
Francesco Locatello, Gabriele Abbati, Thomas Rainforth, Stefan Bauer, Bernhard
  Sch\"{o}lkopf, and Olivier Bachem.
\newblock On the fairness of disentangled representations.
\newblock In {\em Advances in Neural Information Processing Systems},
  volume~32. Curran Associates, Inc., 2019.

\bibitem{CKYMR2016}
Christos Louizos, Kevin Swersky, Yujia Li, Max Welling, and Richard~S. Zemel.
\newblock The variational fair autoencoder.
\newblock In {\em 4th International Conference on Learning Representations,
  {ICLR} 2016, San Juan, Puerto Rico, May 2-4, 2016, Conference Track
  Proceedings}, 2016.

\bibitem{KJ2016}
Kristian Lum and James Johndrow.
\newblock A statistical framework for fair predictive algorithms.
\newblock {\em arXiv preprint arXiv:1610.08077}, 2016.

\bibitem{MW2018}
Aditya~Krishna Menon and Robert~C Williamson.
\newblock The cost of fairness in binary classification.
\newblock In {\em Proceedings of the 1st Conference on Fairness, Accountability
  and Transparency}, volume~81 of {\em Proceedings of Machine Learning
  Research}, pages 107--118. PMLR, 23--24 Feb 2018.

\bibitem{miller1993loss}
John~W Miller, Rod Goodman, and Padhraic Smyth.
\newblock On loss functions which minimize to conditional expected values and
  posterior probabilities.
\newblock {\em IEEE Transactions on Information Theory}, 39(4):1404--1408,
  1993.

\bibitem{NH2018}
Harikrishna Narasimhan.
\newblock Learning with complex loss functions and constraints.
\newblock In {\em Proceedings of the Twenty-First International Conference on
  Artificial Intelligence and Statistics}, volume~84 of {\em Proceedings of
  Machine Learning Research}, pages 1646--1654. PMLR, 2018.

\bibitem{NP1933}
Jerzy Neyman and Egon Pearson.
\newblock Ix. on the problem of the most efficient tests of statistical
  hypotheses.
\newblock {\em Philosophical Transactions of the Royal Society of London.
  Series A, Containing Papers of a Mathematical or Physical Character},
  231(694-706):289--337, 1933.

\bibitem{RBMF2020}
Anian Ruoss, Mislav Balunovic, Marc Fischer, and Martin Vechev.
\newblock Learning certified individually fair representations.
\newblock In {\em Advances in Neural Information Processing Systems 33}, 2020.

\bibitem{SHC2019}
P.~Sattigeri, S.~C. Hoffman, V.~Chenthamarakshan, and K.~R. Varshney.
\newblock Fairness gan: Generating datasets with fairness properties using a
  generative adversarial network.
\newblock {\em IBM Journal of Research and Development}, 63(4/5):3:1--3:9,
  2019.

\bibitem{SC2021}
Nicolas Schreuder and Evgenii Chzhen.
\newblock Classification with abstention but without disparities.
\newblock In {\em Proceedings of the Thirty-Seventh Conference on Uncertainty
  in Artificial Intelligence}, volume 161 of {\em Proceedings of Machine
  Learning Research}, pages 1227--1236. PMLR, 27--30 Jul 2021.

\bibitem{Shao2003}
Jun Shao.
\newblock {\em Mathematical Statistics}.
\newblock Springer-Verlag New York Inc, 2nd edition, 2003.

\bibitem{SEC2019}
Song{\"{u}}l Tolan, Marius Miron, Emilia G{\'{o}}mez, and Carlos Castillo.
\newblock Why machine learning may lead to unfairness: Evidence from risk
  assessment for juvenile justice in catalonia.
\newblock In {\em Proceedings of the Seventeenth International Conference on
  Artificial Intelligence and Law, {ICAIL} 2019, Montreal, QC, Canada, June
  17-21, 2019}, pages 83--92. {ACM}, 2019.

\bibitem{TsybakovBayes2004}
Alexander~B. Tsybakov.
\newblock {Optimal aggregation of classifiers in statistical learning}.
\newblock {\em The Annals of Statistics}, 32(1):135 -- 166, 2004.

\bibitem{UNESCO2020}
UNESCO.
\newblock Outcome document: first draft of the recommendation on the ethics of
  artificial intelligence, 2020.

\bibitem{VSG2018}
Isabel Valera, Adish Singla, and Manuel Gomez~Rodriguez.
\newblock Enhancing the accuracy and fairness of human decision making.
\newblock In {\em Advances in Neural Information Processing Systems},
  volume~31. Curran Associates, Inc., 2018.

\bibitem{CFC2018}
Christina Wadsworth, Francesca Vera, and Chris Piech.
\newblock Achieving fairness through adversarial learning: an application to
  recidivism prediction.
\newblock {\em arXiv preprint arXiv:1807.00199}, 2018.

\bibitem{WQK2020}
Zeyu Wang, Klint Qinami, Ioannis Karakozis, Kyle Genova, Prem Nair, Kenji Hata,
  and Olga Russakovsky.
\newblock Towards fairness in visual recognition: Effective strategies for bias
  mitigation.
\newblock In {\em IEEE/CVF Conference on Computer Vision and Pattern
  Recognition (CVPR)}, 2020.

\bibitem{XWYZW2019}
Depeng Xu, Yongkai Wu, Shuhan Yuan, Lu~Zhang, and Xintao Wu.
\newblock Achieving causal fairness through generative adversarial networks.
\newblock In {\em Proceedings of the Twenty-Eighth International Joint
  Conference on Artificial Intelligence, {IJCAI-19}}, pages 1452--1458.
  International Joint Conferences on Artificial Intelligence Organization,
  2019.

\bibitem{XYZ2018}
Depeng Xu, Shuhan Yuan, Lu~Zhang, and Xintao Wu.
\newblock Fairgan: Fairness-aware generative adversarial networks.
\newblock In {\em 2018 IEEE International Conference on Big Data (Big Data)},
  pages 570--575, 2018.

\bibitem{ZBR2017}
Muhammad~Bilal Zafar, Isabel Valera, Manuel~Gomez Rogriguez, and Krishna~P.
  Gummadi.
\newblock {Fairness Constraints: Mechanisms for Fair Classification}.
\newblock In {\em Proceedings of the 20th International Conference on
  Artificial Intelligence and Statistics}, volume~54 of {\em Proceedings of
  Machine Learning Research}, pages 962--970. PMLR, 2017.

\bibitem{ZWKT2013}
Rich Zemel, Yu~Wu, Kevin Swersky, Toni Pitassi, and Cynthia Dwork.
\newblock Learning fair representations.
\newblock In {\em Proceedings of the 30th International Conference on Machine
  Learning}, volume~28 of {\em Proceedings of Machine Learning Research}, pages
  325--333. PMLR, 2013.

\bibitem{zeng2024bayesoptimal}
Xianli Zeng, Guang Cheng, and Edgar Dobriban.
\newblock Bayes-optimal fair classification with linear disparity constraints
  via pre-, in-, and post-processing.
\newblock {\em arXiv preprint arXiv:2402.02817}, 2024.

\bibitem{ZLM2018}
Brian~Hu Zhang, Blake Lemoine, and Margaret Mitchell.
\newblock Mitigating unwanted biases with adversarial learning.
\newblock In {\em Proceedings of the 2018 AAAI/ACM Conference on AI, Ethics,
  and Society}, AIES '18, pages 335--340. Association for Computing Machinery,
  2018.

\bibitem{ZWY2017}
Jieyu Zhao, Tianlu Wang, Mark Yatskar, Vicente Ordonez, and Kai-Wei Chang.
\newblock Men also like shopping: Reducing gender bias amplification using
  corpus-level constraints.
\newblock In {\em Proceedings of the 2017 Conference on Empirical Methods in
  Natural Language Processing}, pages 2941--2951, 2017.

\end{thebibliography}


\newpage
\appendix

{\bf Additional notation(s).}
In this appendix, we use some additional notation.
For a real-valued function $f$ defined on $[a,b)$ for some $a<b$, we denote by $\lim_{x\to a^+} f(x)$ the limit from the right of $f$ at $a$, if it exists. Similarly, if $f$ is defined on $(b,a]$ for $b<a$,  we denote by $\lim_{x\to a^-} f(x)$ the limit from the left of $f$ at $a$, if it exists.

\section{Expressions for \texorpdfstring{$\tau^\star_{D,\delta,a}$}\. and Generalized Neyman-Pearson Lemma}
\subsection{Expressions for \texorpdfstring{$\tau^\star_{D,\delta,a}$}1}\label{sec-tau1}

Letting $t^\star_{D,\delta}$ be defined in \eqref{t-star-dp}, we define $\tau^\star_{D,\delta,a}$  as follows:

 Case (1). When $P_{X|A=1}\left(\eta_1(X)=\frac12+\frac{t^\star_{D,\delta}}{2p_1}\right)=P_{X|A=0}\left(\eta_0(X)=\frac12-\frac{t^\star_{D,\delta}}{2p_0}\right)=0$,\\ 
$(\tau_{D,\delta,1}^\star,\tau_{D,\delta,0}^\star)\in[0,1]^2$ can be arbitrary constants.

Case (2). When $P_{X|A=1}\left(\eta_1(X)=\frac12+\frac{t^\star_{D,\delta}}{2p_1}\right)>P_{X|A=0}\left(\eta_0(X)=\frac12-\frac{t^\star_{D,\delta}}{2p_0}\right)=0$,\\
$\tau_{D,\delta,0}^\star\in[0,1]$ can be  an arbitrary constant, and we set
\begin{equation*}
    \tau_{D,\delta,1}^\star=\frac{P_{X|A=0}\left({\eta_0}(X)\geq\frac12-\frac{t^\star_{D,\delta}}{2p_0}\right)+\frac{D^\star}{|D^\star|}-P_{X|A=1}\left(\eta_1(X)>\frac12+\frac{t^\star_{D,\delta}}{2p_1}\right)}{ P_{X|A=1}(\eta_1(X)=\frac12+\frac{t^\star_{D,\delta}}{2p_1})}.
\end{equation*}

Case (3). When $P_{X|A=0}\left(\eta_1(X)=\frac12-\frac{t^\star_{D,\delta}}{2p_0}\right)>P_{X|A=1}\left(\eta_1(X)=\frac12+\frac{t^\star_{D,\delta}}{2p_1}\right)=0$,\\
$\tau_{D,\delta,1}^\star\in[0,1]$ can be  an arbitrary constant,  and we set
\begin{equation*}
    \tau_{D,\delta,0}^\star=\frac{P_{X|A=1}\left({\eta_1}(X)>\frac12+\frac{t^\star_{D,\delta}}{2p_1}\right)-P_{X|A=0}\left(\eta_0(X)>\frac12-\frac{t^\star_{D,\delta}}{2p_0}\right)-\frac{D^\star}{|D^\star|}}{ P_{X|A=0}(\eta_0(X)=\frac12+\frac{t^\star_{D,\delta}}{2p_0})}.
\end{equation*}

 Case (4). When $P_{X|A=1}\left(\eta_1(X)=\frac12+\frac{t^\star_{D,\delta}}{2p_1}\right)>0$ and $P_{X|A=0}\left(\eta_0(X)=\frac12-\frac{t^\star_{D,\delta}}{2p_0}\right)>0$,\\
 we set $\tau_{D,\delta,1}^\star=0$ and
\begin{equation*}
    \tau_{D,\delta,0}^\star=\frac{P_{X|A=1}\left({\eta_1}(X)>\frac12+\frac{t^\star_{D,\delta}}{2p_1}\right)-P_{X|A=0}\left(\eta_0(X)>\frac12-\frac{t^\star_{D,\delta}}{2p_0}\right)-\frac{D^\star}{|D^\star|}}{ P_{X|A=0}(\eta_0(X)=\frac12+\frac{t^\star_{D,\delta}}{2p_0})}.
\end{equation*}

By construction, we have
 \begin{align*}
 \begin{split}
& P_{X|A=1}\left(\eta_1(X)>\frac{1}{2}+\frac{t^\star_{D,\delta}}{p_1}\right)+\tau^\star_{D,\delta,1}P_{X|A=1}\left(\eta_1(X)>\frac{1}{2}+\frac{t^\star_{D,\delta}}{p_1}\right)\\
=& \  P_{X|A=0}\left(\eta_0(X)>\frac{1}{2}-\frac{t^\star_{D,\delta}}{p_0}\right)+\tau^\star_{D,\delta,0}P_{X|A=0}\left(\eta_0(X)>\frac{1}{2}-\frac{t^\star_{D,\delta}}{p_0}\right).
\end{split}
 \end{align*}
 As a function of $t$, $S_{\eta_1}(\frac12+\frac{t}{p_1})$ is  right-continuous and $S_{\eta_0}(\frac12-\frac{t}{p_0})$ is left-continuous. Then, when  $P_{X|A=1}(\eta_1(X)=\frac12+\frac{t^\star_{D,\delta}}{2p_1})> 0$ or $P_{X|A=0}(\eta_0(X)=\frac12-\frac{t^\star_{D,\delta}}{2p_0})> 0$,
it can be verified that
 \begin{align*}
 \begin{split}
&P_{X|A=0}\left(\eta_0(X)>\frac{1}{2}-\frac{t^\star_{D,\delta}}{p_0}\right)+\frac{D^\star}{|D^\star|}\leq P_{X|A=1}\left(\eta_1(X)>\frac{1}{2}+\frac{t^\star_{D,\delta}}{p_1}\right)\\
\leq & \ P_{X|A=0}\left(\eta_0(X)\geq\frac{1}{2}-\frac{t^\star_{D,\delta}}{p_0}\right)+\frac{D^\star}{|D^\star|}\leq P_{X|A=1}\left(\eta_1(X)\geq\frac{1}{2}+\frac{t^\star_{D,\delta}}{p_1}\right).
\end{split}
 \end{align*}

Thus, we have that $0\leq\tau_{D,\delta,1}^\star,\tau_{D,\delta,0}^\star\le 1$ are well defined.

\subsection{Generalized Neyman-Pearson lemma}
Our argument relies on the following Generalized Neyman-Pearson lemma \citep{Shao2003}.
\begin{lemma}\label{NP_lemma}

Let $\phi_0,\phi_1, ..., \phi_{m}$ be $m+1$ real-valued functions defined on a Euclidean space $\mathcal{X}$. Assume they are $\nu$-integrable for a $\sigma$-finite measure $\nu$.
 Let $f^\star$ be any function of the form
\begin{align}
   \begin{split}
  f^\star(x)=\left\{\begin{array}{lcc}
   1,&& \phi_0(x)>\sum_{i=1}^mc_i\phi_i(x);\\
  \tau(x)&& \phi_0(x)=\sum_{i=1}^mc_i\phi_i(x);\\
   0,&& \phi_0(x)<\sum_{i=1}^mc_i\phi_i(x),
   \end{array}\right.
 \end{split}
 \end{align}
 where $0\le\gamma(x)\le 1$ for all $x\in \mathcal{X}$.
 For given constants $t_1, ...,t_m \in \mathbb{R}$, 
 let $\mathcal{F}$ be the class of Borel functions $f: \mathcal{X}\mapsto \mathbb{R}$  satisfying
 \begin{equation}\label{con1}
 \int_{\mathcal{X}} f\phi_i d\nu\leq t_i, \ \ \ i=1,2,...,m.
 \end{equation}
 and $\mathcal{F}_0$ be the set of functions in $\mathcal{F}$ satisfying \eqref{con1}  with all inequalities
 replaced by equalities. If $f^\star\in\mathcal{F}_0$, then
 $f^\star \in \underset{f\in\mathcal{F}_0}{\text{argmax}}\int_{\mathcal{X}}f\phi_0d\nu.$
 Moreover, if $c_i\ge 0$ for all $i=1,\ldots,m   $, then
 $f^\star \in  \underset{f\in\mathcal{F}}{\text{argmax}}\int_{\mathcal{X}} f\phi_0 d\nu.$
\end{lemma}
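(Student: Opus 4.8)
The plan is to reduce the entire statement to one pointwise inequality encoding the threshold structure of $f^\star$, and then integrate. I will work throughout with the (standard, and here implicit) restriction that the competitors are test functions, i.e.\ $0\le f\le 1$ holds $\nu$-almost everywhere; without such a bound $\int f\phi_0\,d\nu$ need not be bounded above and no maximizer exists. Note that $f^\star$ itself obeys $0\le f^\star\le 1$, since on the equality set it takes a value in $[0,1]$ and elsewhere it equals $0$ or $1$.

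The key observation, which I would verify first, is that for every $x\in\mathcal{X}$ and every competitor $f$ with $0\le f\le 1$,
\[
\left(f^\star(x)-f(x)\right)\left(\phi_0(x)-\sum_{i=1}^m c_i\phi_i(x)\right)\ge 0.
\]
This is checked region by region: where $\phi_0(x)>\sum_i c_i\phi_i(x)$ we have $f^\star(x)=1\ge f(x)$, so both factors are nonnegative; where $\phi_0(x)<\sum_i c_i\phi_i(x)$ we have $f^\star(x)=0\le f(x)$, so both are nonpositive; and on the equality set the second factor vanishes. Since each $\phi_i$ is $\nu$-integrable and $|f^\star-f|\le 1$, the product is integrable, and integrating the displayed inequality yields the master inequality
\[
\int_{\mathcal{X}}(f^\star-f)\phi_0\,d\nu\ \ge\ \sum_{i=1}^m c_i\int_{\mathcal{X}}(f^\star-f)\phi_i\,d\nu.
\]

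With this in hand the two conclusions follow by controlling the right-hand side. For the first claim, assume $f^\star\in\mathcal{F}_0$ and take any $f\in\mathcal{F}_0$: then $\int f^\star\phi_i\,d\nu=t_i=\int f\phi_i\,d\nu$ for each $i$, so every summand vanishes and $\int f^\star\phi_0\,d\nu\ge\int f\phi_0\,d\nu$, giving optimality over $\mathcal{F}_0$. For the second claim, assume additionally $c_i\ge 0$ and take any $f\in\mathcal{F}$: here $f^\star\in\mathcal{F}_0$ saturates each constraint while $f$ only satisfies it as an inequality, so $\int(f^\star-f)\phi_i\,d\nu=t_i-\int f\phi_i\,d\nu\ge 0$; multiplying by $c_i\ge 0$ and summing keeps the right-hand side nonnegative, and again $\int f^\star\phi_0\,d\nu\ge\int f\phi_0\,d\nu$.

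There is no deep obstacle here: once the pointwise inequality is established the argument is a few lines of integration. The step most easily gotten wrong, and the one I would write out most carefully, is the sign bookkeeping in the second part --- it relies essentially on the combination of $c_i\ge 0$ \emph{and} the fact that $f^\star$ meets each constraint with equality, which is why the hypothesis $f^\star\in\mathcal{F}_0$ cannot be dropped. I would also flag the implicit $0\le f\le 1$ assumption explicitly, since the class as literally stated (``$f:\mathcal{X}\to\mathbb{R}$'') is exactly what makes the pointwise inequality applicable.
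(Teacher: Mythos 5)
Your proof is correct. Note that the paper does not actually prove this lemma at all --- it is quoted from the cited reference (Shao, \emph{Mathematical Statistics}), and your argument is precisely the standard proof given there and in Lehmann's treatment of the generalized Neyman--Pearson lemma: the pointwise sign inequality
\[
\bigl(f^\star(x)-f(x)\bigr)\Bigl(\phi_0(x)-\sum_{i=1}^m c_i\phi_i(x)\Bigr)\ \ge\ 0,
\]
followed by integration and the two bookkeeping steps (constraints saturated with equality for $\mathcal{F}_0$; nonnegativity of $c_i$ combined with saturation for $\mathcal{F}$). Your flag about the hypothesis $0\le f\le 1$ is also well taken: as transcribed in the paper, $\mathcal{F}$ is described as a class of arbitrary real-valued Borel functions, which makes the conclusion false (and the supremum typically infinite); the correct statement, as in the cited source, restricts competitors to functions with values in $[0,1]$, exactly the restriction your pointwise inequality requires. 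The same sloppiness appears in the paper's $\tau(x)$ versus $\gamma(x)$ mismatch on the equality set, which is immaterial to the argument.
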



\section{Proof of Theorem \ref{thm-opt-dp}}\label{sec:proofA}
\begin{proof}
If $|D^\star|\le\delta$, we are done since $f^\star$ is the optimal rule over all measurable classifiers $f$.
Thus, we assume $|D^\star|>\delta$ from now on.
Let $f$ be a classifier that assigns $\widehat{Y}=1$ with probability $f(x,a)$ when observing $X=x$ and $A=a$. The mis-classification rate of $f$ is
\begin{align*}
\begin{split}
R(f)&=P(\widehat{Y}_f\neq Y)
=1-P(\widehat{Y}_f=1,Y=1)-P(\widehat{Y}_f=0,Y=0)\\
&=P(\widehat{Y}_f=1,Y=0)-P(\widehat{Y}_f=1,Y=1)+P(Y=1).
\end{split}
\end{align*}
Thus, minimizing the mis-classification rate with respect to $f$ is equivalent to maximizing  $P(\widehat{Y}_f=1,Y=1)-P(\widehat{Y}_f=1,Y=0)$, which can be expressed as
\begin{align*}
\begin{split}
&P(\widehat{Y}_f=1,Y=1)-P(\widehat{Y}_f=1,Y=0)
=\E\left[I(\widehat{Y}_f=1,Y=1)-I(\widehat{Y}_f=1,Y=0)\right]\\
=&\E\left\{\E\left[\left.I(\widehat{Y}_f=1,Y=1)-I(\widehat{Y}_f=1,Y=0)\right|X,A\right]\right\}\\
=&\int_{\mathcal{A}}\int_{\mathcal{X}}f(x,a)\eta_a(x)-f(x,a)(1-\eta(x,a))dP_{X,A}(x,a)\\
=&\int_{\mathcal{A}}\int_{\mathcal{X}}f(x,a)(2\eta_a(x)-1)dP_{X,A}(x,a).
\end{split}
\end{align*}


Next,
for any classifier $f$, we have,
\begin{align*}\label{constraint1}
\begin{split}
\textup{DDP}(f)&=P(\widehat{Y}_f=1|A=1)-P(\widehat{Y}_f=1|A=0)\\
&=\int_{\mathcal{X}}f(x,1)dP_{X|A=1}(x)-\int_{\mathcal{X}}f(x,0)dP_{X|A=0}(x)\\
&=\sum_{a\in\mathcal{A}}\left\{I(a=1)\int_{\mathcal{X}}f(x,a)dP_{X|A=a}(x)-I(a=0)\int_{\mathcal{X}}f(x,a)dP_{X|A=a}(x)\right\}\\
&=\int_{\mathcal{A}}\int_{\mathcal{X}}\frac{I(a=1)}{p_1}f(x,a)dP_{X|A=a}(x)-\int_{\mathcal{A}}\int_{\mathcal{X}}\frac{I(a=0)}{p_1}f(x,a)dP_{X|A=a}(x)dP_A(a)\\
&=\int_{\mathcal{A}}\int_{\mathcal{X}}f(x,a)\left[\frac{I(a=1)}{p_1}-\frac{I(a=0)}{p_0}\right]dP_{X,A}(x,a)\\
&=\int_{\mathcal{A}}\int_{\mathcal{X}}f(x,a)\left(\frac{2a-1}{p_a}\right)dP_{X,A}(x,a).
\end{split}
\end{align*}
The fourth equation holds since for any function $g$ defined on $\mathcal{A}$, we have $\int_{\mathcal{A}}g(a)dP_A(a)=p_1g(1)+p_0g(0)$.

This shows that the $\delta$-fair Bayes-optimal classifier can be expressed as
$$f_{D,\delta}^\star\in \underset{f\in\mathcal{F}_\delta}{\text{argmax}} \int_{\mathcal{A}}\int_{\mathcal{X}}f(x,a)(2\eta_a(x)-1)dP_{X,A}(x,a),$$
with
$$\mathcal{F}_{\delta}=\left\{f:\left|\int_{\mathcal{A}}\int_{\mathcal{X}}f(x,a)\left(\frac{2a-1}{p_a}\right)dP_{X,A}(x,a)\right|\leq \delta\right\}.$$

We first consider the  one-sided constraint:
\begin{equation}\label{constraint1}
   \frac{D^\star}{|D^\star|}\int_{\mathcal{A}}\int_{\mathcal{X}}f(x,a)\left(\frac{2a-1}{p_a}\right)dP_{X,A}(x,a)\leq \delta. 
\end{equation}

Denote $\phi_0(x,a)=2\eta_a(x)-1$ and $\phi_1(x,a)=\frac{(2a-1)D^\star}{p_a|D^\star|}$. 
Moreover, 
let $\mathcal{T}_{\delta}$  be the class of Borel functions $f$ satisfying \eqref{constraint1} 
and
$\mathcal{T}_{\delta,0}$ be the set of functions in $\mathcal{T}_{\delta}$ satisfying \eqref{constraint1} with the inequality replaced by equality. 
It is clear that $\mathcal{T}_{\delta,0}\subset \mathcal{F}_{\delta}\subset\mathcal{T}_{\delta,0}$. 

Now, consider the classifier $f$ of the form
\begin{align}\label{class1}
\begin{split}
f_{s,\tau_1,\tau_0}(x,a)&=\left\{\begin{array}{lcc}
 1,    & & \phi_0(x,a)>s\phi_1(x,a);\\
  \tau_a,   & &\phi_0(x,a)=s\phi_1(x,a);\\
0,     & & \phi_0(x,a)<s\phi_1(x,a).
\end{array}\right.
\end{split}
\end{align}

Note that
$2\eta_a(x)-1>s\frac{(2a-1)D^\star }{p_a|D^\star|}$ is equivalent to
$
\eta_a(x)>\frac{1}{2}+\frac{(2a-1)sD^\star}{2|D^\star|p_a}. 
$
As a result, $f_{s,\tau}(x,a)$ in \eqref{class1} can be written as
\begin{align}\label{opt-in-app}
\begin{split}
 &f_{s,\tau_1,\tau_0}(x,a)=
I\left(\eta_{a}(x)> \frac12+\frac{(2a-1)sD^\star}{2|D^\star|p_a} \right)+ \tau_a I\left(\eta_a(x)=\frac12+\frac{(2a-1)sD^\star}{2|D^\star|p_a}\right).
\end{split}
\end{align}
Moreover,  the constraint \eqref{constraint1} is equivalent to
 \begin{eqnarray}\label{reduce-cons}
    \nonumber &&\frac{D^\star}{|D^\star|}\left[P_{X|A=1}\left({\eta_1}(X)>\frac{1}{2}+\frac{sD^\star}{2|D^\star|p_1}\right)+\tau_1 P_{X|A=1}\left(\eta_1(X)=\frac{1}{2}+\frac{sD^\star}{2|D^\star|p_1}\right)\right.\\
     &&\quad\left.-P_{X|A=0}\left({\eta_0}(X)>\frac{1}{2}-\frac{sD^\star}{2|D^\star|p_0}\right)+\tau_0 P_{X|A=0}\left(\eta_0(X)=\frac{1}{2}-\frac{sD^\star}{2|D^\star|p_0}\right)\right]\leq\delta.
 \end{eqnarray}
 Let $t_{D,\delta}^\star$ be defined as in \eqref{t-star-dp} and $(\tau_{D,\delta,1}^\star,\tau_{D,\delta,0}^\star)$ are given in section \ref{sec-tau1}.
We have
$f_{s_{D,\delta}^\star,\tau_{D,\delta,1}^\star,\tau_{D,\delta,0}^\star}\in \mathcal{T}_{\delta,0}$ with $s_{D,\delta}^\star=\frac{D^\star}{|D^\star|}t_{D,\delta}^\star$. Then, it
follows from the generalized Neyman-Pearson lemma \ref{NP_lemma} that
$$f_{s_{D,\delta}^\star,\tau_{D,\delta}^\star}\in\underset{f\in\mathcal{T}_{\delta,0}}{\text{argmax}}\int_{\mathcal{X}}f_{s,\tau}(x,a)\phi_0(x,a)dP_X(x).$$ 
Recalling the definition of $t_{D,\delta}^\star$ in \eqref{t-star-dp}, we have $t_{D,\delta}^\star\ge0$ when $D^\star>\delta$,
as
$$\textup{DDP}(f_{s,0})
=S_{\eta_1}\left(\frac{1}{2}+\frac{s}{2p_1}\right)-S_{\eta_0}\left(\frac{1}{2}-\frac{s}{2p_1}\right)$$ is monotone non-increasing as a function of $s$, and
$$\textup{DDP}(f_{t_{D,\delta}^\star,0})\leq\textup{DDP}(f_{t_{D,\delta}^\star,\tau_{D,\delta}^\star})=\delta<\textup{DDP}(f_{0,0})=D^\star.$$
Similarly, we have $t_{D,\delta}^\star\le 0$ when $D^\star<-\delta$. It follows that $s_{D,\delta}^\star=\frac{D^\star}{|D^\star|}t_{D,\delta}^\star\ge 0$. As a result,
$$f_{s_{D,\delta}^\star,\tau_{D,\delta}^\star}\in\underset{f\in\mathcal{T}_{\delta}}{\text{argmax}}\int_{\mathcal{X}}f_{s,\tau}(x,a)M(x,a)dP_X(x).$$ 
Since $f_{s_{D,\delta}^\star,\tau_{D,\delta}^\star}\in\mathcal{T}_{\delta,0}\subset \mathcal{F}_{\delta}\subset\mathcal{T}_{\delta,0}$, we can conclude that
$$f_{s_{D,\delta}^\star,\tau_{D,\delta}^\star}\in\underset{f\in\mathcal{F}_{\delta}}{\text{argmax}}\int_{\mathcal{X}}f_{s,\tau}(x,a)M(x,a)dP_X(x).$$ 
The proof is completed by taking $s_{D,\delta}^\star=\frac{D^\star}{|D^\star|}t_{D,\delta}^\star$ in \eqref{opt-in-app}.

\end{proof}


\section{Fair Classification with Cost-Sensitive risk}\label{sec:thm_cost_sensitive}

In Sections \ref{sec:thm_cost_sensitive} to \ref{sec:thm}, 2e extend our theory to fair cost-sensitive classification, multi-class protected attributes, and other group fairness measures, including equality of opportunity, predictive equality and overall accuracy equality. In the rest of the appendix,  we will assume that for all $a\in\mathcal{A}$, $\eta_a(X)$ has a density function on $\mathcal{X}$ to avoid tedious discussions of boundary cases, akin to what we provided in Section \ref{sec-tau1}. 
With this assumption,  we only need to focus on the deterministic classifiers.

We first extend our result to classification with a cost-sensitive risk. 
In many applications, such as detecting email spam, 
predicting recidivism, or diagnosing a medical condition, one of false negatives and false positives can be more harmful than the others. 
In these cases, it helpful to use a cost-sensitive risk, 
taking the costs of prediction errors into account. 
For
a cost parameter $c\in[0,1]$\footnote{When $c=1/2$, cost-sensitive risk reduces to the usual zero-one risk.}, the cost-sensitive 0-1 risk of the classifier $f$ is defined as
$$R_c(f)=c\cdot P(\hat{Y}_f=1,Y=0)+(1-c)\cdot P(\hat{Y}_f=0, Y=1).$$
An unconstrained Bayes-optimal classifier for the cost-sensitive risk is any minimizer $f^\star\in \text{argmin}_f R_c(f).$  
A classical result is that all Bayes-optimal classifiers have the form 
$f^\star(x,a)=I(\eta_a(x)>c)+\tau I(\eta_a(x)=c)$, where $\tau\in[0,1]$ is arbitrary \cite{C2001CS, MW2018}. 
Taking demographic parity into account, the $\delta$-fair Bayes-optimal classifier for a cost-sensitive risk is defined as 

$$f_{D,c,\delta}^\star\in \underset{f\in\mathcal{F}_\delta}{\text{argmin}} R_c(f).$$
where  $\mathcal{F}_{D,\delta}=\{f:|\text{DDP}(f)|\leq \delta\}$ is the set of  measurable functions satisfying the $\delta$-
tolerance constraint.

\begin{theorem}[Fair Bayes-optimal Classifiers with Cost-sensitive Risk]\label{thm-opt-cost-sen}
All fair Bayes-optimal classifiers $f_{D,c,\delta}^\star$ under the  constraint $|\textup{DDP}(f)|\le\delta$ are given as follows.
If  $|D^\star|\leq \delta$,
  then  $f^\star_{D,\delta}$ can be any Bayes-optimal classifier $f^\star$ from Proposition \ref{prop-ba-op}.
Otherwise, for all $x\in \mathcal{X}$ and $a\in \mathcal{A}$,
\begin{align*}
\begin{split}
 &f^\star_{D,\delta}(x,a)=
I\left(\eta_{a}(x)> c+\frac{(2a-1)t^\star_{D,c,\delta}}{p_a}\right),
\end{split}
\end{align*}
where 
$t_{D,c,\delta}^\star$ is defined as
\begin{align*}
\begin{split}
&\sup\left\{t:P_{X|A=1}\left(\eta_1(X)>c+\frac{t}{p_1}\right)>P_{X|A=0}\left(\eta_0(X)>c-\frac{t}{p_0}\right)+\frac{D^\star}{|D^\star|} \delta\right\}.
\end{split}
\end{align*}
\end{theorem}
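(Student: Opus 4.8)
The plan is to reproduce the Neyman-Pearson reduction used in the proof of Theorem \ref{thm-opt-dp}, with the only substantive change being the objective functional induced by the cost-sensitive risk. First, if $|D^\star|\le\delta$, then the unconstrained cost-sensitive Bayes-optimal classifier already satisfies $|\textup{DDP}|\le\delta$, so it is feasible and hence optimal; this settles the first case. From now on I would assume $|D^\star|>\delta$.

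The key computation is to rewrite $R_c(f)$ so that minimization becomes a linear maximization. Using $P(\hat{Y}_f=0,Y=1)=P(Y=1)-P(\hat{Y}_f=1,Y=1)$, one obtains
\begin{align*}
R_c(f)=(1-c)P(Y=1)+c\cdot P(\hat{Y}_f=1,Y=0)-(1-c)P(\hat{Y}_f=1,Y=1).
\end{align*}
Since $(1-c)P(Y=1)$ does not depend on $f$, minimizing $R_c(f)$ is equivalent to maximizing $(1-c)P(\hat{Y}_f=1,Y=1)-c\,P(\hat{Y}_f=1,Y=0)$. Conditioning on $(X,A)$ exactly as in the proof of Theorem \ref{thm-opt-dp}, this equals
\begin{align*}
&\int_{\mathcal{A}}\int_{\mathcal{X}}f(x,a)\left[(1-c)\eta_a(x)-c(1-\eta_a(x))\right]dP_{X,A}(x,a)\\
&\qquad=\int_{\mathcal{A}}\int_{\mathcal{X}}f(x,a)(\eta_a(x)-c)\,dP_{X,A}(x,a).
\end{align*}
Thus the only difference from the demographic-parity theorem is that the integrand multiplier $2\eta_a(x)-1$ is replaced by $\eta_a(x)-c$ (these coincide up to the factor $2$ when $c=1/2$, which accounts for the factor-$2$ discrepancy in the thresholds). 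The DDP constraint is unchanged, and still equals $\int_{\mathcal{A}}\int_{\mathcal{X}}f(x,a)\left(\frac{2a-1}{p_a}\right)dP_{X,A}(x,a)$.

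With $\phi_0(x,a)=\eta_a(x)-c$ and $\phi_1(x,a)=\frac{(2a-1)D^\star}{p_a|D^\star|}$, I would apply the generalized Neyman-Pearson Lemma \ref{NP_lemma} to the one-sided constraint $\frac{D^\star}{|D^\star|}\int_{\mathcal{A}}\int_{\mathcal{X}}f(x,a)\frac{2a-1}{p_a}\,dP_{X,A}\le\delta$. The resulting optimizer has the thresholding form $\eta_a(x)>c+s(2a-1)D^\star/(p_a|D^\star|)$, and setting $s^\star=\frac{D^\star}{|D^\star|}t^\star_{D,c,\delta}$ yields exactly the threshold $c+(2a-1)t^\star_{D,c,\delta}/p_a$ in the statement. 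The in-force density assumption on $\eta_a(X)$ makes the boundary sets null, so no randomization term appears and the classifier is deterministic. The main work, as in Theorem \ref{thm-opt-dp}, is verifying feasibility: I would show $\textup{DDP}(f_{s,0})$ is monotone non-increasing in $s$, conclude from $|D^\star|>\delta$ that the supremum defining $t^\star_{D,c,\delta}$ is attained with the constraint active, place the candidate in $\mathcal{T}_{\delta,0}$, and check $s^\star\ge0$ so that the nonnegativity condition on the multiplier in Lemma \ref{NP_lemma} holds, thereby upgrading optimality from the equality class $\mathcal{T}_{\delta,0}$ to all of $\mathcal{F}_\delta$. This sign-and-monotonicity bookkeeping is the only delicate point; everything else is a direct transcription of the demographic-parity argument.
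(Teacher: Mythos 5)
Your proposal is correct and follows essentially the same route as the paper's own (sketched) proof: the identical rewriting of the cost-sensitive risk as maximization of $\int_{\mathcal{A}}\int_{\mathcal{X}}f(x,a)(\eta_a(x)-c)\,dP_{X,A}(x,a)$, the same integral representation of the DDP constraint, and the same application of the generalized Neyman-Pearson lemma with $\phi_0(x,a)=\eta_a(x)-c$, with the density assumption on $\eta_a(X)$ removing the randomization term, and the sign/monotonicity bookkeeping deferred to the argument of Theorem \ref{thm-opt-dp} exactly as the paper does. If anything, your version is slightly more careful than the paper's sketch, which swaps the weights $c$ and $1-c$ in one display and drops the multiplier $t$ in the thresholding condition.
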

\begin{proof}
We only sketch the proof, as it follows the same argument as in the proof of Theorem \ref{op-de-rule-dp}.
Note that minimizing the cost-sensitive risk with respect to $f$ is equivalent to maximizing  $(1-c)P(\widehat{Y}_f=1,Y=1)-cP(\widehat{Y}_f=1,Y=0)$, which can be expressed as
\begin{align*}
\begin{split}
&\E\left[(1-c)I(\widehat{Y}_f=1,Y=1)-cI(\widehat{Y}_f=1,Y=0)\right]\\
=&\E\left\{\E\left[(1-c)\left.I(\widehat{Y}_f=1,Y=1)-cI(\widehat{Y}_f=1,Y=0)\right|X,A\right]\right\}\\
=&\int_{\mathcal{A}}\int_{\mathcal{X}}cf(x,a)\eta_a(x)-(1-c)f(x,a)(1-\eta(x,a))dP_{X,A}(x,a)\\
=&\int_{\mathcal{A}}\int_{\mathcal{X}}f(x,a)(\eta_a(x)-c)dP_{X,A}(x,a).
\end{split}
\end{align*}
 Moreover, we have $\textup{DDP}(f) = \int_{\mathcal{A}}\int_{\mathcal{X}}f(x,a)\left(\frac{2a-1}{p_a}\right)dP_{X,A}(x,a).$  When $|D^\star|\leq\delta$, the result is clear. We now consider the case $|D^\star|>\delta$. By the generalized Neyman-Person Lemma \ref{NP_lemma}, the fair Bayes-optimal classifiers take the form
\begin{align*}
\begin{split}
f_{t,\tau_1,\tau_0,c}(x,a)&=\left\{\begin{array}{lcc}
 1,    & & \eta_a(x)-c>\frac{2a-1}{p_a};\\
  \tau_a,   & & \eta_a(x)-c=\frac{2a-1}{p_a};\\
0,     & & \eta_a(x)-c<\frac{2a-1}{p_a}.
\end{array}\right.
\end{split}
\end{align*}
 We can set $\tau_a=0$ as $P_{X|A=a}(\eta_a(X)=t)=0$ for all $a\in\mathcal{A}$ and $t\in[0,1]$, which completes the proof.

\end{proof}

\section{Fair Bayes-optimal Classifier and FairBayes Algorithm with a Multi-class Protected Attribute}\label{sec:thm_multi}

\subsection{Fair Bayes-optimal Classifier with a Multi-class Protected Attribute}
In this section, we extend our theory to the case of a multi-class protected attribute. 
We assume that $A\in\mathcal{A}=\{1,2,...,|\A|\}$ for some integer $|\A|>2$. 
 For simplicity, we only consider perfect demographic parity. Moreover, we assume that 
for $a\in\A$, $\eta_a(X)$ has density on $\mathcal{X}$ to avoid tedious discussions of boundary cases. 
A classifier satisfies demographic parity if
   $$P(\widehat{Y}_f  = 1|A = a)=P(\widehat{Y}_f  = 1),  \ \ \ \text{ for } \ \ a=1,2,...,\A.$$
Denote by $\mathcal{F}_{D}$ the set of measurable functions satisfying demographic parity:
$$\mathcal{F}_{D}=\left\{f:\X\times\A\mapsto[0,1]: \sum_{a=1}^{|\A|}|P(\widehat{Y}_f  = 1|A = a)-P(\widehat{Y}_f  = 1)|= 0\right\}.$$
Then, the  fair Bayes-optimal classifier  is defined as
$$f_{D}^\star\in \underset{f\in\mathcal{F}_D}{\text{argmin}} [P(Y\neq \widehat{Y}_f)].$$

\begin{theorem}\label{thm-opt-dp-multi}
The fair Bayes-optimal classifier $f^\star_{D}$ has the following form:
\begin{align}\label{op-de-rule-dp-multi}
\begin{split}
 &f^\star_{D}(x,a)=
    I\left(\eta_{a}(x)> \frac12+\frac{t^\star_{D,a}}{2p_a}\right).
\end{split}
\end{align}
Here 
$\{t_{D,a}^\star\}_{a=1}^{|\mathcal{A}|}$ satisfy 
$\sum\limits_{a=1}^{|\A|} t_{D,a}^\star=0$, and
for all $a \in \{2,\ldots,|\A|\}$,
\begin{align}\label{t-star-dp-multi}
P_{X|A=a}\left(\eta_a(X)>\frac12+\frac{t_{D,a}^\star}{2p_a}\right)=
P_{X|A=1}\left(\eta_{1}(x)>\frac12+\frac{t_{D,1}^\star}{2p_{1}}\right).
\end{align}

\end{theorem}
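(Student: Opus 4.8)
The plan is to mirror the proof of Theorem~\ref{thm-opt-dp}, replacing the single two-sided tolerance constraint by the $|\A|-1$ linear \emph{equality} constraints that encode perfect demographic parity. As in that proof, I would first observe that minimizing $P(\widehat{Y}_f\neq Y)$ over $f$ is equivalent to maximizing $\int_{\A}\int_{\X} f(x,a)\,(2\eta_a(x)-1)\,dP_{X,A}(x,a)$, so I take the objective integrand $\phi_0(x,a)=2\eta_a(x)-1$. Next I would rewrite the fairness condition: since $P(\widehat{Y}_f=1\mid A=a)=P(\widehat{Y}_f=1)$ for every $a$ holds iff all the group acceptance rates $P(\widehat{Y}_f=1\mid A=a)=\int_\X f(x,a)\,dP_{X\mid A=a}(x)$ coincide, it suffices to impose their $|\A|-1$ differences relative to group~$1$ to vanish. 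Writing, for $a\in\{2,\dots,|\A|\}$, $\phi_a(x,a')=\frac{I(a'=a)}{p_a}-\frac{I(a'=1)}{p_1}$, each constraint reads $\int_\A\int_\X f\,\phi_a\,dP_{X,A}=0$. These are pure equality constraints, so I can invoke Lemma~\ref{NP_lemma} in its $\mathcal{F}_0$ form; crucially, unlike the binary case I never need the sign condition $c_i\ge0$, since I do not pass to an inequality-constrained class.

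I would then read off the threshold form from the Neyman--Pearson optimizer $f^\star(x,a)=I\big(\phi_0(x,a)>\sum_{a'\ge2}\lambda_{a'}\phi_{a'}(x,a)\big)$ (plus a boundary term). Evaluating $\sum_{a'\ge2}\lambda_{a'}\phi_{a'}$ at a fixed group: for $a\ge2$ only the $I(\cdot=a)$ piece survives, giving comparison level $\lambda_a/p_a$, while for $a=1$ the $I(\cdot=1)$ pieces accumulate to $-\sum_{a'\ge2}\lambda_{a'}/p_1$. Setting $t^\star_{D,a}=\lambda_a$ for $a\ge2$ and $t^\star_{D,1}=-\sum_{a'\ge2}\lambda_{a'}$ rewrites the rule uniformly as $\eta_a(x)>\tfrac12+\tfrac{t^\star_{D,a}}{2p_a}$, which is exactly \eqref{op-de-rule-dp-multi}; the identity $\sum_a t^\star_{D,a}=0$ then falls out \emph{automatically} from this relabeling rather than being an extra requirement, and the equality constraint $\phi_a$ becomes precisely \eqref{t-star-dp-multi}. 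Because each $\eta_a(X)$ has a density, the boundary set $\{\eta_a(X)=\tfrac12+t^\star_{D,a}/(2p_a)\}$ is $P_{X\mid A=a}$-null, so the randomization term vanishes and $f^\star$ is deterministic, as claimed.

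The main obstacle is \emph{existence}: I must produce thresholds satisfying both $\sum_a t^\star_{D,a}=0$ and the $|\A|-1$ equalities \eqref{t-star-dp-multi}, so that the Neyman--Pearson rule actually lands in $\mathcal{F}_D=\mathcal{F}_0$ and the lemma applies. I would reduce this to one scalar equation by parametrizing through the common acceptance rate $\rho$. For each group let $S_{\eta_a}(u)=P_{X\mid A=a}(\eta_a(X)>u)$, which is continuous and non-increasing under the density assumption; choosing $u_a(\rho)$ with $S_{\eta_a}(u_a(\rho))=\rho$ and setting $t_a(\rho)=2p_a\big(u_a(\rho)-\tfrac12\big)$ makes all group rates equal to $\rho$, so the $|\A|-1$ equalities hold for \emph{every} $\rho$. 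The single remaining equation is $\sum_a t_a(\rho)=0$, equivalently $\sum_a p_a\,u_a(\rho)=\tfrac12$; since $\rho\mapsto\sum_a t_a(\rho)$ is non-increasing, an intermediate-value argument yields a root.

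The genuinely delicate point is that the survival functions may be flat, so $u_a(\rho)$ is only set-valued on plateaus, and the desired root of $\sum_a t_a(\rho)$ may fall on such a plateau (this is exactly the non-binding regime, e.g.\ when the unconstrained Bayes rule already satisfies parity). There I would exploit the freedom in the choice of $u_a(\rho)$ on the flat stretch to hit $\sum_a p_a u_a=\tfrac12$ exactly, the same kind of boundary bookkeeping already carried out for the binary case in Section~\ref{sec-tau1}. Verifying that this freedom always suffices, and confirming the endpoint signs of $\sum_a t_a(\rho)$, is where the real work lies; the rest is a direct transcription of the binary argument.
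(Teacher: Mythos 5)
Your proposal is correct and follows essentially the same route as the paper: the same reduction to maximizing $\int f\,(2\eta_a-1)\,dP_{X,A}$ subject to the $|\A|-1$ equality constraints encoded by $\phi_a(x,\ta)=\frac{I(\ta=a)}{p_a}-\frac{I(\ta=1)}{p_1}$, the same invocation of the equality-constrained ($\mathcal{F}_0$) form of Lemma~\ref{NP_lemma} with the reparametrization $t_1=-\sum_{a\ge 2}t_a$, and the same existence argument via the common acceptance rate. The ``delicate plateau point'' you flag is exactly what the paper's Lemma~\ref{existence-multi} resolves, using the interval $[\underline{Q}_a(s^\star),\overline{Q}_a(s^\star)]$ (on which the group acceptance rate is constant) and a convex-combination choice of $t^\star_{D,a}$ within it to force $\sum_{a}t^\star_{D,a}=0$.
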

\begin{remark}\label{difficulty}
For a multi-class protected attribute, at the moment, our theoretical framework can only handle perfect fairness. 
Under perfect fairness, we have $|\A|-1$ equality constraints, and the fair Bayes-optimal classifier can be derived using the Neyman-Pearson lemma.
However, for approximate fairness for a multi-class protected attribute, the total number of equality and inequality constraints is unknown ahead of time. 
A careful analysis of these two types of constraints is required, and we leave this to future work.
\end{remark}

\begin{proof}
We first demonstrate the existence of $\{t_{D,a}^\star\}_{a=1}^{|\mathcal{A}|}$, in the following lemma.

\begin{lemma}\label{existence-multi}
Suppose that, for all $a\in\mathcal{A}$, $\eta_a(X)$ has a density function on $\mathcal{X}$. Then, there exist  
$\{t_{D,a}^\star\}_{a=1}^{|\mathcal{A}|}$ such that 
$\sum\limits_{a=1}^{|\A|} t_{D,a}^\star=0$, and
for all $a \in \{2,\ldots,|\A|\}$,
\begin{align*}
P_{X|A=a}\left(\eta_a(X)>\frac12+\frac{t_{D,a}^\star}{2p_a}\right)=
P_{X|A=1}\left(\eta_{1}(x)>\frac12+\frac{t_{D,1}^\star}{2p_{1}}\right).
\end{align*}

\end{lemma}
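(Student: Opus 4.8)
The plan is to recast the two displayed conditions as a one-dimensional search and solve it by an intermediate-value argument. For each $a\in\A$, introduce the group-wise selection rate
$$S_a(t):=P_{X|A=a}\left(\eta_a(X)>\tfrac12+\tfrac{t}{2p_a}\right),$$
the fraction of group $a$ predicted to be $1$ when its threshold is $\tfrac12+\tfrac{t}{2p_a}$. The requirements of the lemma say exactly that all groups share a common selection rate $v:=S_a(t^\star_{D,a})$ (this is the demographic-parity condition \eqref{t-star-dp-multi}) and that $\sum_a t^\star_{D,a}=0$. Because $t\mapsto \tfrac12+\tfrac{t}{2p_a}$ is increasing, each $S_a$ is non-increasing; and since $\eta_a(X)$ has a density, $S_a$ is continuous, with $S_a(t)\to1$ as the threshold falls to $0$ (i.e.\ as $t\to -p_a$) and $S_a(t)\to0$ as the threshold rises to $1$ (as $t\to p_a$).

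First I would parametrize by the common rate $v\in[0,1]$. For each $v$ in the range of the $S_a$, continuity and monotonicity let me select a threshold $t_a(v)$ with $S_a(t_a(v))=v$; as $v$ decreases from $1$ to $0$, each $t_a(v)$ increases from $-p_a$ to $p_a$. Define $G(v):=\sum_{a} t_a(v)$, which is monotone in $v$. At the extreme $v=1$ one may take $t_a=-p_a$, so $G=-\sum_a p_a=-1<0$, while at $v=0$ one may take $t_a=p_a$, so $G=+\sum_a p_a=+1>0$. Hence $G$ sweeps from a positive value (small $v$) to a negative value (large $v$), and an intermediate-value argument produces $v^\star$ with $G(v^\star)=0$; setting $t^\star_{D,a}:=t_a(v^\star)$ then satisfies both $\sum_a t^\star_{D,a}=0$ and the common-rate equations \eqref{t-star-dp-multi}.

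The step I expect to be the main obstacle is the regularity of the inverse $v\mapsto t_a(v)$. The density assumption guarantees that $S_a$ is continuous but not that it is strictly decreasing: on any interval of thresholds where the density of $\eta_a(X)$ vanishes, $S_a$ is flat, so $t_a(v)$ is determined only up to an interval and the naive inverse has jumps. A monotone $G$ with a downward jump through $0$ could in principle ``skip over'' the target value. I would resolve this by working with the generalized inverses $\underline{t}_a(v)=\inf\{t:S_a(t)\le v\}$ and $\overline{t}_a(v)=\sup\{t:S_a(t)\ge v\}$, and exploiting the freedom to place each $t_a$ anywhere in its flat region $[\underline{t}_a(v),\overline{t}_a(v)]$: at a fixed $v$ the attainable values of $\sum_a t_a$ fill the whole interval $[\sum_a\underline{t}_a(v),\sum_a\overline{t}_a(v)]$, and this interval is exactly the jump of $G$ at $v$. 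Consequently the attainable sums, taken over all admissible threshold choices with a common rate, form a connected set running from $-1$ to $+1$, which therefore contains $0$; any choice realizing the value $0$ yields the desired $\{t^\star_{D,a}\}$.
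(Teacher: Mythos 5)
Your proposal is correct and takes essentially the same route as the paper: the paper likewise parametrizes by the common selection rate, introduces the two generalized inverses $\underline{Q}_a(s)$ and $\overline{Q}_a(s)$ (your $\underline{t}_a(v)$, $\overline{t}_a(v)$), locates the critical rate $s^\star$ where $\sum_a \underline{Q}_a(s^\star)\le 0\le \sum_a \overline{Q}_a(s^\star)$, and then uses the freedom within the flat intervals $[\underline{Q}_a(s^\star),\overline{Q}_a(s^\star)]$ — realized there by an explicit convex combination — to make the thresholds sum exactly to zero. Your connectedness argument for the attainable sums is the same idea as the paper's one-sided-continuity-plus-interpolation step, just phrased slightly less explicitly.
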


\begin{proof}
Define the functions $\underline{Q}_a, \overline{Q}_a:[0,1]\to\R$,
\begin{align*}
\begin{split}
    \underline{Q}_a(s)&=\sup\left\{t: P_{X|A=a}\left(\eta_a(X)>\frac12+\frac{t}{p_a}\right)>s\right\};  \\
\overline{Q}_a(s)&=\sup\left\{t:P_{X|A=a}\left(\eta_a(X)>\frac12+\frac{t}{p_a}\right)\ge s\right\}.
\end{split}\end{align*} 
By definition, we have the following:
\begin{compactitem}
\item Both $\underline{Q}_a$ and $\overline{Q}_a$ are strictly monotonically decreasing, as $\eta_a(X)$ has a density function;
\item $\underline{Q}_a$ is right continuous and $\underline{Q}_a$ is left continuous;
\item For all $s_0\in[0,1]$, $\lim_{s\to s_0^-}\underline{Q}_a(s)=\lim_{s\to s_0^-}\underline{Q}_a(s)$ and $\lim_{s\to s_0^+}\underline{Q}_a(s)=\lim_{s\to s_0^+}\underline{Q}_a(s)$ with $\underline{Q}_a(s_0)\leq \overline{Q}_a(s_0)$;
\item   For all $t\in[\underline{Q}_a(s),\overline{Q}_a(s)]$, 
$P_{X|A=a}(\eta(X)>\frac{1}{2}+\frac{t}{2p_a})=s.$
\end{compactitem}
Now, we set
 $$s^\star=\sup\{s: \sum_{a=1}^{|\A|}\underline{Q}_a(s)>0\}=\sup\{s: \sum_{a=1}^{|\A|}\overline{Q}_a(s)>0\}.$$
  By the  right continuity of $\sum_{a=1}^{|\A|}\underline{Q}_a(s)$ and left continuity of $\sum_{a=1}^{|\A|}\overline{Q}_a(s)$, we have
  $$\sum_{a=1}^{|\A|}\underline{Q}_a(s^\star)\leq 0 \leq \sum_{a=1}^{|\A|}\overline{Q}_a(s^\star).$$
  Letting 
  $$t^\star_{D,a}=\frac{\sum_{a=1}^{|\A|}\overline{Q}_{a}(s^\star)}{\sum_{a=1}^{|\A|}\overline{Q}_{a}(s^\star)-\sum_{a=1}^{|\A|}\underline{Q}_{a}(s^\star)}\underline{Q}_{a}(s^\star)-\frac{\sum_{a=1}^{|\A|}\underline{Q}_{a}(s^\star)}{\sum_{a=1}^{|\A|}\overline{Q}_{a}(s^\star)-\sum_{a=1}^{|\A|}\underline{Q}_{a}(s^\star)}\overline{Q}_{a}(s^\star),$$
  we have $\sum_{a=1}^{|\A|}t^\star_{D,a}=0$.
  Moreover, as $t^\star_{D,a}\in[\underline{Q}_{a}(s),\overline{Q}_{a}(s)]$ for all $a\in\A$, we have, for all $a\in\A$,
  $$P_{X|A=a}\left(\eta_a(X)>\frac{1}{2}+\frac{t^\star_{D,a}}{2p_a}\right)=s^\star.$$
    This completes the proof.
 \end{proof}

Let $f$ be a classifier that assigns $\widehat{Y}_f=1$ with probability $f(x,a)$ when observing $X=x$ and $A=a$. 
Using the same argument as in the proof of Theory \ref{thm-opt-dp}, we can write
\begin{align*}
\begin{split}
P(\widehat{Y}_f=1,Y=1)-P(\widehat{Y}_f=1,Y=0)&=\int_{\mathcal{A}}\int_{\mathcal{X}}f(x,\tilde{a})\phi_0(x,\tilde{a})dP_{X,A}(x,\ta).
\end{split}
\end{align*}
and
\begin{equation}\label{cons-multi}
P_{X|A=a}(\widehat{Y}_f=1)-P_{X|A=1}(\widehat{Y}_f=1)=\int_{\mathcal{A}}\int_{\mathcal{X}}\phi_a(x,\tilde{a})f(x,\ta)dP_{X,A}(x,\tilde{a})
\end{equation}
with
$$\phi_0(x,\tilde{a})=2\eta_{\tilde{a}}(x)-1; \ \ \ \text{ and } \ \  \ \phi_a(x,\ta)=\frac{I(\ta=a)}{p_a}-\frac{I(\ta=1)}{p_1}.$$
This shows that the fair Bayes-optimal classifier can be expressed as
$$f_{D}^\star\in \underset{f\in\mathcal{F}}{\text{argmax}} \int_{\mathcal{A}}\int_{\mathcal{X}}f(x,\ta)(2\eta_{\ta}(x))dP_{X,A}(x,\ta),$$
with
$$\mathcal{F}=\left\{f:\int_{\mathcal{A}}\int_{\mathcal{X}}f(x,\ta)\left(\frac{I(\ta=a)}{p_a}-\frac{I(\ta=1)}{p_1}\right)dP_{X,A}(x,\ta)=0, \ \ a=2,3,...,|\A|\right\}.$$


For $t = (t_2,\ldots, t_{|\A|})^\top \in \R^{|\A|-1}$ and $\tau = (\tau_2,\ldots, \tau_{|\A|})^\top \in[0,1]^K$, 
let $f$ be the classifier of the form
\begin{align}\label{optimal-mul}
\begin{split}
f(x,a,t_2,\ldots,t_\A)&=\left\{\begin{array}{lcc}
 1,    & & \phi_0(x,\ta)>\sum\limits_{a=2}^{|\mathcal{A}|}t_a\phi_a(x,\ta);\\
 \tau_{\ta},   & & \phi_0(x,\ta)=\sum\limits_{a=2}^{|\mathcal{A}|}t_a\phi_a(x,\ta);\\
0,     & & \phi_0(x,\ta)<\sum\limits_{a=2}^{|\mathcal{A}|}t_a\phi_a(x,\ta).
\end{array}\right.
\end{split}
\end{align}
Writing $t_1=-\sum\limits_{a=2}^{|\mathcal{A}|}t_a$, we have that $\phi_0(x,\ta)>\sum\limits_{a=2}^{|\mathcal{A}|}t_a\phi_a(x,\ta)$ for $\ta\in\A$  is equivalent to
$ \eta_{a}(x)>\frac12+\frac{t_a}{2p_a}$ 
for $a\in\A$, 
Since $\eta_{a}(X)$ has a density function on $\mathcal{X}$, we can set $\tau_a=0$ since $P(\eta_{a}(X)=\frac12+\frac{t_a}{2p_a}|A=a)=0$ for  $a\in\mathcal{A}$. 
As a result, $f$ in \eqref{optimal-mul} can be written as in \eqref{op-de-rule-dp-multi}, with $t^*_{D,a}$ replaced by $t_{a}$, for all $a\in \A$.
{Moreover, since $\sum_{a=1}^{|\A|} t_a = 0$, the constraint \eqref{cons-multi} is equivalent to \eqref{t-star-dp-multi}, with $t^*_{D,a}$ replaced by $t_{a}$, for all $a\in \A$.}
By lemma \ref{existence-multi}, there are $(t^*_{D,a})_{a=1}^{|\A|}$ such that constraint \eqref{t-star-dp-multi} is satisfied. In addition,
 by the generalized Neyman-Pearson lemma \ref{NP_lemma}, we have
$$f_{D}^\star=f(x,a,t^\star_{D,2},\ldots,t^\star_{D,|\A|})\in \underset{f\in\mathcal{F}}{\text{argmax}} \int_{\mathcal{A}}\int_{\mathcal{X}}f(x,\ta)(2\eta_{\ta}(x))dP_{X,A}(x,\ta).$$
which completes the proof.
\end{proof}

\subsection{FairBayes Algorithm for a Multi-class Protected Attribute}

In this section, we propose Algorithm \ref{alg:multi} for fair Bayes-optimal classification with a multi-class protected attribute. Our goal is to find parameters $\{\hat{t}_{a}\}_{a=1}^{|\A|}$ to satisfy the empirical version of the constraint \eqref{t-star-dp-multi}. We adopt the following two-stage method.

In the first step we learn the feature- and group-conditional label probabilities $\eta$ based on the whole dataset,  as in the FairBayes for a binary protected attribute (Algorithm \ref{alg:FBC1}). In the second step,
we divide the data into $|\mathcal{A}|$ parts,
according to the value of $A$: for $a\in\mathcal{A}$, 
$S_a=\{x^{(a)}_{j},y^{(a)}_{j}\}_{j=1}^{n_a}$, where $a^{(a)}_{j}=a$. 
Based on Theorem~\ref{thm-opt-dp-multi}, we consider the following  deterministic classifiers: 
\begin{equation}\label{classifier1}
\tilde{f}(x,a;t_1,t_2,...,t_{|\mathcal{A}|})=I\left(\widehat\eta_{a}(x)> \frac12+\frac{nt_a}{2n_a}\right),
\end{equation}
where ${\eta}$ and $p_a$ are replaced by their empirical estimates, and $t_a$, $a\in\mathcal{A}$, are parameters to learn. 

We use the following strategy to estimate $t_a$, $a\in\A$: First, we fix the first parameter for the group with $a=1$, say $t$.
To achieve demographic parity, we need to find parameters $\{t_a\}_{a=2}^{|\A|}$ for the other groups such that the proportion of positive prediction are the same\footnote{Since a sample mean $n^{-1}\sum_{i=1}^n Z_i$ of iid random variables $Z_i$ has a variability of order $O_P(n^{-1/2})$, even if the true predictive parities are equal, the empirical versions may differ by $O_P(n^{-1/2})$. However, in our case we simply find the values $t_a,t$ for which they are as close as possible.}, i.e., find $t_a$, $a=2,3,\ldots,|\A|$, such that
\begin{equation}\label{thresh-estimate}
\frac1{n_a}\sum_{j=1}^{n_a}I\left(\hat{\eta}_a(x_j^{(a)})>\frac12+\frac{nt_a}{n_a}\right)
\approx
\frac1{n_1}\sum_{j=1}^{n_1}I\left(\hat{\eta}_1(x_j^{(1)})>\frac12+\frac{nt}{n_1}\right)
, \ \ \ \text{ for } a = 2,3,...,|\mathcal{A}|.
\end{equation}
Note that for $a\in\mathcal{A}$, $\frac1{n_a}\sum_{j=1}^{n_a}I\left(\hat{\eta}_a(x_j^{(a)})>\frac12+\frac{nt_a}{n_a}\right)$ is non-increasing as the parameter $t_a$ increases. 
As a consequence, we can search over
$t_a$, $a=2,3,\ldots,|\A|$, 
efficiently  via, for instance, the bisection method. 
We denote by $\hat t_a(t)$, $a=2,3,\ldots,|\A|$, the estimated thresholds given by \eqref{thresh-estimate}, writing $\hat t_1(t)=t$ for convenience. 
We consider the classifier \eqref{classifier1} with these thresholds:
\begin{equation*}
\hat{f}_t(x,a)
=\tilde{f}\left(x,a;\hat t_1(t),\hat t_2(t),...,\hat t_{|\mathcal{A}|}(t)\right)
=I\left(\widehat\eta_{a}(x) > \frac12+\frac{n\hat t_a(t)}{n_a}\right).
\end{equation*}
 Last, we find $t$ such that
\begin{equation}\label{t-estimate}\sum_{a=1}^{|\A|}\hat{t}_a(t)\approx 0.
\end{equation} 
Again, we can search over $t$ efficiently via, for instance, the bisection method
as $\sum_{a=1}^{|\A|}\hat{t}_a(t)$ is non-decreasing in $t$. 
We denote by $\hat {t}$ the parameter satisfying \eqref{t-estimate}.
Our final estimator of the fair Bayes-optimal classifier is $\hat{f}_{D} = \hat f_{\hat t}$. 
\begin{algorithm}[tb]
   \caption{FairBayes for a multi-class protected attribute}
   \label{alg:multi}
\begin{algorithmic}
   \STATE {\bfseries Input:}  Datasets $S=\cup_{a=1}^{|\mathcal{A}|}S_a$ with $S=\{x_{i},a_i,y_{i}\}_{i=1}^{n}$ and $S_a=\{x^{(a)}_{j},y^{(a)}_{j}\}_{j=1}^{n_a}$. Cost parameter $c\in[0,1]$.

   \STATE {\textbf{Step 1}:} Estimate $\eta_a(x)$ by $ \widehat{\eta}\in \underset{f_\theta\in\mathcal{F}}{\text{argmin}}\frac1{n}\sum_{i=1}^{n}L(y_{i},f_{\theta}(x_{i},a_i)).$
           \STATE {\textbf{Step 2}: Find the optimal thresholds.}.
  \FOR{$t \in \mathcal{T}=[0, \max_j\hat\eta_1(x^{(1)}_{j})]$}
    \FOR{$a\in\mathcal{A} \setminus\{1\}$}
    \STATE Find $\hat{t}_a(t)$ such that   {$$
\frac1{n_a}\sum_{j=1}^{n_a}I\left(\hat{\eta}_a(x_j^{(a)})>\frac12+\frac{nt_a}{n_a}\right)
\approx
\frac1{n_1}\sum_{j=1}^{n_1}I\left(\hat{\eta}_1(x_j^{(1)})>\frac12+\frac{nt}{n_1}\right).
$$}
    \ENDFOR  
    \ENDFOR 
   \STATE Find $\hat{t}$ such that  $\sum_{a=1}^{|\A|}\hat{t}_a(\hat{t})\approx 0$. 

\STATE {\bfseries Output:}  
$\widehat{f}_{D}(x,a)=I(\widehat\eta_{a}(x)>\frac{1}{2}+\frac{n\hat{t}_a(\hat{t})}{n_a})$.
    \end{algorithmic}
\end{algorithm}

\section{Bayes-optimal Classifiers under Other Group Fairness Measures}\label{sec:thm}
The following metrics
are widely used to analyze accuracy in classification: the true-positive rate (TPR), true-negative rate (TNR), false-positive rate (FPR)  and false-negative rate (FNR). They are defined as
\begin{align}
\begin{split}
\text{TPR}&=1-\text{FNR}=P(\widehat{Y}_f=1|Y=1);\\
\text{TNR}&=1-\text{FPR}=P(\widehat{Y}_f=0|Y=0).
\end{split} 
\end{align}
Based on these metrics, various group fairness measures can be defined.
 \begin{definition}[Equality of Opportunity \citep{HPS2016}]
    A classifier satisfies equality of opportunity if it achieves the same TPR (or FNR) among  protected groups:
   $$P_{X|A=1,Y=1}(\widehat{Y}_f  = 1) =P_{X|A=0,Y=1}(\widehat{Y}_f  = 1).$$ 
  \end{definition}
  
   \begin{definition}[Predictive Equality \citep{CSFG2017}]
    A classifier satisfies predictive equality  if it achieves the same TNR (or FPR) among protected groups:
   $$P_{X|A=1,Y=0}(\widehat{Y}_f  = 1) =P_{X|A=0,Y=0}(\widehat{Y}_f  = 1 ).$$ 
  \end{definition}

     \begin{definition}[Overall Accuracy Equality \citep{BHJR2021}]
    A classifier satisfies  overall accuracy equality (or equalizing disincentives) if it achieves the same accuracy among different protected groups:
   \begin{align*}&P_{X|A=1,Y=1}(\widehat{Y}  = 1)+P_{X|A=1,Y=0}(\widehat{Y}  = 0) =P_{X|A=0,Y=1}(\widehat{Y}  = 1)+P_{X|A=0,Y=0}(\widehat{Y}  = 0).
   \end{align*}
  \end{definition}

Our theory and FairBayes algorithm can be naturally extended to these group fairness measures. 
We now introduce the fair Bayes-optimal classifiers and our variants of the FairBayes algorithm for these measures. 
In our setting, for a given fairness measure, the level of disparity is quantified by the difference between the probabilities equalized in this fairness measure.  As a result, we use DEO, DPE and DOA to measure the degree of violating
equality of opportunity, predictive equality and overall accuracy equality, respectively:
\begin{align*}
    \begin{split}
  \text{DEO}&= P_{X|A = 1,Y = 1}(\widehat{Y}_f = 1) -P_{X|A = 0,Y=1}(\widehat{Y}_f  = 1);\\
   \text{DPE}&= P_{X|A = 1,Y=0}(\widehat{Y}_f  = 1) -P_{X|A = 0,Y=0}(\widehat{Y}_f  = 1);\\
   \text{DOA}&= P_{X|A = 1,Y=1}(\widehat{Y}_f  = 1)+ P_{X|A = 1,Y=0}(\widehat{Y}_f  = 0)\\
   &\quad- P_{X|A = 0,Y=1}(\widehat{Y}_f  = 1)- P_{X|A = 0,Y=0}(\widehat{Y}_f  = 0).
    \end{split}
\end{align*}

\subsection{Bayes-Optimal Classifiers under Group Fairness}
In this section, we introduce the fair Bayes-optimal classifiers under other group fairness measure.  We define the following functional sets  satisfying the \emph{$\delta$- tolerance} constraint under equality of opportunity, predictive equality and overall accuracy equality, respectively.
\begin{align*}
    \begin{split}
  \mathcal{F}_{E,\delta}=\{f: |\textup{DEO}(f)|\le \delta\};\qquad
  \mathcal{F}_{P,\delta}=\{f: |\textup{DPE}(f)|\le \delta\};\qquad
    \mathcal{F}_{O,\delta}=\{f: |\textup{DOA}(f)|\le \delta\}.
    \end{split}
\end{align*}

The $\delta$-fair Bayes-optimal classifier with respect to equality of opportunity, predictive equality and overall accuracy equality are respectively  defined as
\begin{align*}
    \begin{split}
  f_{E,\delta}^\star\in \underset{f\in\mathcal{F}_\delta}{\text{argmin}} [P(Y\neq \widehat{Y}_f)];\qquad
  f_{P,\delta}^\star\in \underset{f\in\mathcal{F}_\delta}{\text{argmin}} [P(Y\neq \widehat{Y}_f)];\qquad
    f_{O,\delta}^\star\in \underset{f\in\mathcal{F}_\delta}{\text{argmin}} [P(Y\neq \widehat{Y}_f)].
    \end{split}
\end{align*}

Let $E^\star=\sup_{f^*}\textup{DEO}(f^\star)$, $P^\star=\sup_{f^*}\textup{DPE}(f^\star)$ and $O^\star=\sup_{f^*}\textup{DOA}(f^\star)$ with the supremum be taken over all Bayes-optimal classifiers $f^\star$  from Proposition \ref{prop-ba-op}.
{Again, we assume $\eta_1(X)$ and $\eta_0(X)$ have density functions on $\mathcal{X}$ to avoid tedious discussions of boundary cases.}
Our results are summarized in Theorem \ref{thm-opt-EO},Theorem \ref{thm-opt-PE} and Theorem \ref{thm-opt-OA}, respectively.

\begin{theorem}[Fair Bayes-optimal Classifiers under Equality of Opportunity]\label{thm-opt-EO}
Let $E^\star=\textup{DEO}(f^\star)$.
For any $\delta> 0$, all fair Bayes-optimal classifiers $f_{E,\delta}^\star$ under the fairness constraint $|\textup{DEO}(f)|\le\delta$ are given as follows: If $|E^\star|\leq \delta$,
then  $f^\star_{E,\delta}$ can be any Bayes-optimal classifier $f^\star$ from Proposition \ref{prop-ba-op}.
    Otherwise for all $x\in \mathcal{X}$ and $a\in \mathcal{A}$,
\begin{align}\label{opt-de-EO}
\begin{split}
 &f^\star_{E,\delta}(x,a)=
I\left(\eta_{a}(x)> \frac{p_ap_{Y,a}}{2p_ap_{Y,a}+(1-2a)t_{E,\delta}^\star}\right)
\end{split}
\end{align}
where
$t_{E,\delta}^\star$ is defined as
\begin{align*}
\begin{split}
&t_{E,\delta}^\star=\sup\left\{t:P_{X|A=1,Y=1}\left(\eta_1(X)>\frac{p_1p_{Y,1}}{2p_1p_{Y,1}-t}\right)\right.\\
&\qquad\qquad\qquad\qquad\qquad\qquad\left.>P_{X|A=0,Y=1}\left(\eta_0(X)>\frac{p_0p_{Y,0}}{2p_0p_{Y,0}+t}\right)+\frac{E^\star}{|E^\star|} \delta\right\}.
\end{split}
\end{align*}
\end{theorem}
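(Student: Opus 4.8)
The plan is to mirror the proof of Theorem \ref{thm-opt-dp}, since the objective to be maximized is unchanged and only the fairness functional differs. As in the demographic parity case, minimizing the misclassification rate $P(Y\neq\widehat Y_f)$ is equivalent to maximizing
$$P(\widehat{Y}_f=1,Y=1)-P(\widehat{Y}_f=1,Y=0)=\int_{\mathcal{A}}\int_{\mathcal{X}}f(x,a)(2\eta_a(x)-1)\,dP_{X,A}(x,a),$$
so the objective functional, with integrand $\phi_0(x,a)=2\eta_a(x)-1$, is identical to before. The case $|E^\star|\le\delta$ is immediate because the unconstrained Bayes-optimal classifier is then already feasible, so I would focus on $|E^\star|>\delta$.

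The first genuinely new step is to rewrite $\textup{DEO}(f)$ as a single integral against $dP_{X,A}$. Using Bayes' rule to change measure, $dP_{X|A=a,Y=1}(x)=\frac{\eta_a(x)}{p_{Y,a}}\,dP_{X|A=a}(x)$, and then converting $dP_{X|A=a}$ into $\frac1{p_a}dP_{X,A}$ restricted to the slice $\{A=a\}$, I would obtain
$$\textup{DEO}(f)=\int_{\mathcal{A}}\int_{\mathcal{X}}f(x,a)\,\frac{(2a-1)\eta_a(x)}{p_a\,p_{Y,a}}\,dP_{X,A}(x,a).$$
This identifies the constraint integrand $\phi_1(x,a)=\frac{E^\star}{|E^\star|}\cdot\frac{(2a-1)\eta_a(x)}{p_a p_{Y,a}}$ and places the problem in exactly the form required by the generalized Neyman-Pearson Lemma \ref{NP_lemma}, applied to the one-sided constraint $\frac{E^\star}{|E^\star|}\textup{DEO}(f)\le\delta$.

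I would then invoke Lemma \ref{NP_lemma} with these $\phi_0,\phi_1$ and a multiplier $s$, obtaining the classifier that sets $\widehat Y=1$ exactly when $\phi_0(x,a)>s\,\phi_1(x,a)$. The key computation is to solve this inequality for $\eta_a(x)$: writing $t=s\,E^\star/|E^\star|$, the inequality $2\eta_a(x)-1>t\frac{(2a-1)\eta_a(x)}{p_a p_{Y,a}}$ rearranges to $\eta_a(x)\big(2p_ap_{Y,a}-(2a-1)t\big)>p_ap_{Y,a}$, and dividing by the bracketed factor recovers precisely the rational threshold $\eta_a(x)>\frac{p_ap_{Y,a}}{2p_ap_{Y,a}+(1-2a)t}$ of \eqref{opt-de-EO}. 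Since $\eta_a(X)$ is assumed to have a density, the boundary event has probability zero and I can take the randomization $\tau_a=0$, yielding the deterministic rule.

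The final step, and the main obstacle, is twofold. First, unlike the affine threshold in the demographic parity proof, here $\phi_1$ itself carries the factor $\eta_a(x)$, so the threshold is a \emph{rational} rather than linear function of $t$; one must check that $2p_ap_{Y,a}-(2a-1)t$ stays positive over the relevant range of $t$, so that dividing preserves the direction of the inequality and the threshold remains well-defined in $[0,1]$. Second, to upgrade from equality-constrained optimality to inequality-constrained optimality (the analogues of $\mathcal{T}_{\delta,0}$ and $\mathcal{T}_\delta$ from the proof of Theorem \ref{thm-opt-dp}), I must verify the multiplier is nonnegative, i.e.\ $s=t^\star_{E,\delta}E^\star/|E^\star|\ge0$. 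For this I would argue, exactly as in Theorem \ref{thm-opt-dp}, that $\textup{DEO}$ of the thresholded family is monotone in $t$ and equals $E^\star$ at $t=0$, which forces $t^\star_{E,\delta}\ge0$ when $E^\star>\delta$ and $t^\star_{E,\delta}\le0$ when $E^\star<-\delta$; in either case $s\ge0$, so Lemma \ref{NP_lemma} delivers global optimality, and the proof concludes by taking $t=t^\star_{E,\delta}$ in \eqref{opt-de-EO}.
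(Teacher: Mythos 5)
Your proposal is correct and follows essentially the same route as the paper: rewrite $\textup{DEO}(f)$ as $\int_{\mathcal{A}}\int_{\mathcal{X}}f(x,a)\frac{(2a-1)\eta_a(x)}{p_a p_{Y,a}}\,dP_{X,A}(x,a)$ via the change of measure $dP_{X|A=a,Y=1}=\frac{\eta_a}{p_{Y,a}}dP_{X|A=a}$, invoke the generalized Neyman--Pearson lemma with $\phi_0=2\eta_a-1$ and this constraint integrand, solve the resulting inequality for the rational threshold on $\eta_a(x)$, and use the density assumption plus the monotonicity/sign argument from Theorem \ref{thm-opt-dp} to get a deterministic rule and a nonnegative multiplier. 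In fact your write-up is somewhat more careful than the paper's sketch (which simply defers to the Theorem \ref{thm-opt-dp} argument), notably in flagging that the denominator $2p_ap_{Y,a}-(2a-1)t$ must stay positive over the relevant range of $t$.
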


\begin{theorem}[Fair Bayes-optimal Classifiers under Predictive Equality]\label{thm-opt-PE}
Let $P^\star=\textup{DPE}(f^\star)$.
For any $\delta> 0$, all fair Bayes-optimal classifiers $f_{P,\delta}^\star$ under the fairness constraint $|\textup{DPE}(f)|\le\delta$ are given as follows:
If $|P^\star|\leq \delta$, then
$f^\star_{P,\delta}$ can be any Bayes-optimal classifier $f^\star$ from Proposition \ref{prop-ba-op}.
Otherwise,  
for all $x\in \mathcal{X}$ and $a\in \mathcal{A}$,
\begin{align}\label{opt-de-pp}
\begin{split}
 &f^\star_{P,\delta}(x,a)=
I\left(\eta_{a}(x)>\frac{p_a(1-p_{Y,a})+(2a-1)t_{P,\delta}^\star}{2p_a(1-p_{Y,a})+(2a-1)t_{P,\delta}^\star}\right),
\end{split}
\end{align}
where 
$t_{P,\delta}^\star$ is defined as
\begin{align*}
\begin{split}
&t_{P,\delta}^\star=\sup\left\{t:P_{X|A=1,Y=0}\left(\eta_1(X)>\frac{p_1(1-p_{Y,1})+t}{2p_1(1-p_{Y,1})+t}\right)\right.\\
&\qquad\qquad\qquad\qquad\qquad\qquad\left.>P_{X|A=0,Y=0}\left(\eta_0(X)>\frac{p_0(1-p_{Y,0})-t}{2p_0(1-p_{Y,0})-t}\right)+\frac{P^\star}{|P^\star|} \delta\right\}.
\end{split}
\end{align*}
\end{theorem}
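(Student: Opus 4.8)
The plan is to mirror the proof of Theorem \ref{thm-opt-dp}, the only substantive change being the integral representation of the fairness constraint. As before, if $|P^\star|\le\delta$ then any Bayes-optimal classifier from Proposition \ref{prop-ba-op} already satisfies $|\textup{DPE}(f)|\le\delta$, so I would assume $|P^\star|>\delta$ throughout. Minimizing the misclassification rate is equivalent to maximizing $P(\widehat Y_f=1,Y=1)-P(\widehat Y_f=1,Y=0)=\int_{\A}\int_{\X}f(x,a)(2\eta_a(x)-1)\,dP_{X,A}(x,a)$, exactly as in the demographic parity proof, so the objective functional $\phi_0(x,a)=2\eta_a(x)-1$ is unchanged.

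The first real step is to rewrite the constraint. Conditioning on $(X,A)$ gives $P(\widehat Y_f=1,Y=0,A=a)=p_a\int_{\X}f(x,a)(1-\eta_a(x))\,dP_{X|A=a}(x)$, and dividing by $P(A=a,Y=0)=p_a(1-p_{Y,a})$ yields $P_{X|A=a,Y=0}(\widehat Y_f=1)=\frac{1}{1-p_{Y,a}}\int_{\X}f(x,a)(1-\eta_a(x))\,dP_{X|A=a}(x)$. Combining the two groups, I obtain the representation
$$\textup{DPE}(f)=\int_{\A}\int_{\X}f(x,a)\,\frac{(2a-1)(1-\eta_a(x))}{p_a(1-p_{Y,a})}\,dP_{X,A}(x,a).$$
I would then invoke the generalized Neyman-Pearson Lemma \ref{NP_lemma} with $\phi_0(x,a)=2\eta_a(x)-1$ and $\phi_1(x,a)=\frac{P^\star}{|P^\star|}\cdot\frac{(2a-1)(1-\eta_a(x))}{p_a(1-p_{Y,a})}$, applied to the one-sided constraint $\frac{P^\star}{|P^\star|}\textup{DPE}(f)\le\delta$, so that the optimal classifier has the thresholded form $f(x,a)=I(\phi_0(x,a)>s\,\phi_1(x,a))$, which is deterministic since the standing density assumption on $\eta_a(X)$ kills the boundary set.

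The key computation is to convert this into a threshold on $\eta_a(x)$ alone. Writing $c_a=\frac{(2a-1)t}{p_a(1-p_{Y,a})}$ for the effective multiplier in group $a$ (after the $P^\star/|P^\star|$ factors cancel, with $s=\frac{P^\star}{|P^\star|}t$), the inequality $2\eta_a(x)-1>c_a(1-\eta_a(x))$ rearranges to $\eta_a(x)(2+c_a)>1+c_a$, i.e.
$$\eta_a(x)>\frac{1+c_a}{2+c_a}=\frac{p_a(1-p_{Y,a})+(2a-1)t}{2p_a(1-p_{Y,a})+(2a-1)t},$$
which is exactly \eqref{opt-de-pp} once I identify $t$ with $t^\star_{P,\delta}$. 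To close the argument I would verify, as in Theorem \ref{thm-opt-dp}, that $\textup{DPE}(f^t)$ with these thresholds is monotone in $t$---increasing $t$ raises the group-$1$ threshold and lowers the group-$0$ threshold, hence decreases the group-$1$ FPR and increases the group-$0$ FPR---so the supremum defining $t^\star_{P,\delta}$ makes the one-sided constraint tight, and the associated multiplier is nonnegative, which is what the second conclusion of Lemma \ref{NP_lemma} needs to upgrade optimality from the equality class $\mathcal{F}_0$ to the full inequality-constrained class.

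The main obstacle I anticipate is the step where $\phi_1$ depends on $\eta_a(x)$ through the factor $1-\eta_a(x)$: unlike the demographic parity case, solving $\phi_0>s\phi_1$ for $\eta_a$ is not a plain shift of the constant $1/2$, and the rearrangement divides by $2+c_a$. I must check that the denominator $2p_a(1-p_{Y,a})+(2a-1)t$ stays strictly positive over the relevant range of $t$, both to preserve the direction of the inequality and to guarantee that the resulting threshold lies in $(0,1)$; controlling the sign of $t$ through the monotonicity and multiplier-nonnegativity argument above is precisely what makes this rigorous.
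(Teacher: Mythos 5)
Your proposal is correct and takes essentially the same route as the paper: the paper likewise represents $\textup{DPE}(f)=\int_{\mathcal{X}\times\mathcal{A}}f(x,a)\,\frac{(2a-1)(1-\eta_a(x))}{p_a(1-p_{Y,a})}\,dP_{X,A}(x,a)$, applies the generalized Neyman--Pearson lemma against $\phi_0(x,a)=2\eta_a(x)-1$ exactly as in the proof of Theorem \ref{thm-opt-dp}, and rearranges the resulting inequality into the threshold form \eqref{opt-de-pp}. Your derivation of the constraint by conditioning on $(X,A)$, rather than the paper's likelihood-ratio identity $dP_{X|A=a,Y=0}/dP_{X|A=a}=(1-\eta_a)/(1-p_{Y,a})$, is an immaterial variation, and your monotonicity and sign-of-multiplier checks are precisely the ones the paper carries out in detail for the demographic-parity case.
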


\begin{theorem}[Fair Bayes-optimal Classifiers under Overall Accuracy Equality]\label{thm-opt-OA}
Let $O^\star=\textup{DOA}(f^\star)$.
For any $\delta> 0$, all fair Bayes-optimal classifiers $f_{O,\delta}^\star$ under the fairness constraint $|\textup{DOA}(f)|\le\delta$ are given as follows.
If $|O^\star|\leq \delta$, then
  $f^\star_{O,\delta}$ can be any Bayes-optimal classifier $f^\star$ from Proposition \ref{prop-ba-op}.
Otherwise,  
for all $x\in \mathcal{X}$ and $a\in \mathcal{A}$,
\begin{align*}
 f^\star_{O,\delta}(x,a)&=
I\left(\eta_{a}(x)> \zeta_{O}(t^\star_{O,\delta},a)\right)
\end{align*}
where 
$$\zeta_{O}(t,a)=\frac{p_{a}p_{Y,a}(1-p_{Y,a})+(1-2a)P_{Y,a}t_{O,\delta}^\star}{2p_ap_{Y,a}(1-p_{Y,a})+(1-2a)t},$$ 
and
$t_{O,\delta}^\star$ is defined as
\begin{align*}
\begin{split}
&t_{O,\delta}^\star=\sup\left\{t:P_{X|A=1,Y=1}\left(\eta_1(X)>\zeta_{O}(t,1)\right)-P_{X|A=1,Y=0}\left(\eta_1(X)>\zeta_{O}(t,1)\right)\right.\\
&\qquad \qquad > \left.P_{X|A=0,Y=1}\left(\eta_0(X)>\zeta_{O}(t,0)\right)-P_{X|A=0,Y=0}\left(\eta_0(X)>\zeta_{O}(t,0)\right)+\frac{O^\star}{|O^\star|} \delta\right\}.
\end{split}
\end{align*}
\end{theorem}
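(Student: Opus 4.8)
The plan is to follow the same Neyman--Pearson template used in the proof of Theorem~\ref{thm-opt-dp}, now adapted to the overall-accuracy-equality constraint. First I would dispose of the trivial case $|O^\star|\le\delta$, where an unconstrained Bayes-optimal classifier already lies in $\mathcal{F}_{O,\delta}$. Assuming $|O^\star|>\delta$, I would rewrite the misclassification-rate objective exactly as before, so that minimizing $R(f)$ is equivalent to maximizing $\int_{\mathcal{A}}\int_{\mathcal{X}} f(x,a)\,(2\eta_a(x)-1)\,dP_{X,A}(x,a)$; this identifies $\phi_0(x,a)=2\eta_a(x)-1$, as in the demographic-parity case.

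The genuinely new computation is to express $\textup{DOA}(f)$ as a single linear functional $\int_{\mathcal{A}}\int_{\mathcal{X}} f(x,a)\,\psi(x,a)\,dP_{X,A}(x,a)$ in $f$. I would carry this out by converting each group- and label-conditional probability in the definition of DOA via $dP_{X|A=a,Y=1}=(\eta_a/p_{Y,a})\,dP_{X|A=a}$ and $dP_{X|A=a,Y=0}=((1-\eta_a)/(1-p_{Y,a}))\,dP_{X|A=a}$, together with $dP_{X|A=a}=p_a^{-1}dP_{X,A}$. After the constants coming from the $\widehat Y=0$ terms cancel, collapsing the two resulting fractions yields
\[
\psi(x,a)=(2a-1)\,\frac{\eta_a(x)-p_{Y,a}}{p_a\,p_{Y,a}(1-p_{Y,a})}.
\]
Setting $\phi_1=\frac{O^\star}{|O^\star|}\psi$ and applying the generalized Neyman--Pearson lemma (Lemma~\ref{NP_lemma}) to the one-sided constraint, the optimal rule assigns $\widehat Y=1$ on $\{\phi_0>s\phi_1\}$. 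Solving $2\eta_a-1>s\,\frac{O^\star}{|O^\star|}(2a-1)\frac{\eta_a-p_{Y,a}}{p_ap_{Y,a}(1-p_{Y,a})}$ for $\eta_a$ (clearing the denominator and collecting the $\eta_a$ terms) reproduces exactly the threshold $\zeta_O(t,a)$ of the statement, with $t$ playing the role of the scaled multiplier $s\,O^\star/|O^\star|$. Since we assume $\eta_a(X)$ has a density, the set $\{\eta_a=\zeta_O\}$ has probability zero and no randomization term survives.

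It then remains to pin down $t^\star_{O,\delta}$ and to upgrade optimality from the equality-constrained subclass (the analogue of $\mathcal{T}_{\delta,0}$ in the proof of Theorem~\ref{thm-opt-dp}) to the full class $\mathcal{F}_{O,\delta}$. Here I would invoke the general fact that $s\mapsto\int f^{s}\phi_1\,d\nu$ is non-increasing for the parametrized family $f^{s}=I(\phi_0>s\phi_1)$, so that $\frac{O^\star}{|O^\star|}\textup{DOA}(f^t)$ is non-increasing in $t$; this makes the supremum defining $t^\star_{O,\delta}$ well posed, forces the constraint to bind with equality there, and yields $s\ge 0$, which is exactly the hypothesis needed for Lemma~\ref{NP_lemma} to give optimality over the inequality-constrained class.

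I expect the monotonicity step to be the main obstacle, and for a reason that distinguishes DOA from demographic parity: the constraint integrand $\psi(x,a)$ is \emph{not} sign-definite, since $\eta_a(x)-p_{Y,a}$ changes sign. Consequently $\zeta_O(t,a)$ need not move monotonically in $t$ (its derivative carries the sign of $1-2p_{Y,a}$), and the naive ``the acceptance region shrinks as the threshold rises'' argument that suffices for demographic parity breaks down. The point I would stress is that monotonicity of the constraint value nonetheless holds at the level of the family $f^{s}$: splitting the domain into $\{\phi_1>0\}$ and $\{\phi_1<0\}$ shows that both pieces push $\int f^{s}\phi_1\,d\nu$ downward as $s$ grows. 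I would additionally verify that the denominator $2p_ap_{Y,a}(1-p_{Y,a})-(2a-1)t$ retains a fixed positive sign over the relevant range of $t$, so that the acceptance set is genuinely $\{\eta_a>\zeta_O(t,a)\}$ rather than its complement.
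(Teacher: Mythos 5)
Your proposal is correct and follows essentially the same route as the paper: the same reduction of the objective to $\int f(x,a)(2\eta_a(x)-1)\,dP_{X,A}(x,a)$, the same linearization of $\textup{DOA}$ with integrand $(2a-1)\frac{\eta_a(x)-p_{Y,a}}{p_a p_{Y,a}(1-p_{Y,a})}$ (the paper writes it in the unsimplified form $\frac{2a-1}{p_a}\bigl(\frac{\eta_a(x)}{p_{Y,a}}-\frac{1-\eta_a(x)}{1-p_{Y,a}}\bigr)$, which is algebraically identical), and the same application of the generalized Neyman--Pearson lemma followed by solving the resulting inequality for $\eta_a$ to recover $\zeta_O(t,a)$. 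In fact, your explicit monotonicity argument for $s\mapsto\int f^{s}\phi_1\,d\nu$ (splitting on the sign of $\phi_1$, which is not sign-definite here) and your check that the denominator $2p_ap_{Y,a}(1-p_{Y,a})+(1-2a)t$ stays positive fill in steps that the paper's proof leaves implicit, since it only states that the argument is ``similar to'' the demographic-parity case.
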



\subsection{Proofs of Theorem \ref{thm-opt-EO} to Theorem \ref{thm-opt-OA}}\label{sec:proof2}
\begin{proof}
Again, we only sketch the proof as it follows the same argument as in the proof of Theorem \eqref{op-de-rule-dp}. 
First, we note that
$$\eta_a(x)=\frac{p_{Y,a}dP_{X|A=a,Y=1}(x)}{dP_{X|A=a}(x)},$$
and
$$dP_{X|A=a}(x)=p_{Y,a}dP_{X|A=a,Y=1}(x)+(1-p_{Y,a}dP_{X|A=a,Y=0}(x)).$$
It follows that
$$\frac{dP_{X|A=a,Y=1}(x)}{dP_{X|A=a}(x)}=\frac{\eta(x)}{p_{Y,a}}; \ \text{ and } \  \frac{dP_{X|A=a,Y=0}(x)}{dP_{X|A=a}(x)}=\frac{1-\eta(x)}{1-p_{Y,a}}.$$

Let   $f$ be any randomized classifier. Its $\text{DEO}$, $\text{DPE}$ and $\text{DOA}$ can be expressed in turn as
\begin{align*}
\textup{DEO}(f)&=P(\widehat{Y}_f  = 1|A=1,Y=1)-P(\widehat{Y}_f=1|A=0,  = 1)\\
&=\int_{\mathcal{X}}f(x,1)dP_{X|A=1,Y=1}(x)-\int_{\mathcal{X}}f(x,0)dP_{X|A=0,Y=1}(x)\\
&=\sum_{a\in\mathcal{A}}\left(\int_{\mathcal{X}}\left[\frac{I(a=1)}{p_1}f(x,a)-\frac{I(a=0)}{p_0}f(x,a)\right]dP_{X|A=a,Y=1}(x)\right)\\
&=\int_{\mathcal{A}}\int_{\mathcal{X}}f(x,a)\left(\frac{2a-1}{p_a}\right)\frac{dP_{X|A=a,Y=1}(x)}{dP_{X|A=a}}{dP_{X|A=a}}dP_A(a)\\
&=\int_{\mathcal{X}\times\mathcal{A}}f(x,a)\left(\frac{(2a-1)\eta_a(x)}{p_ap_{Y,a}}\right)dP_{X,A}(x,a);\\
\textup{DPE}(f)&=P(\widehat{Y}_f  = 1|A=1,Y=0)-P(\widehat{Y}_f  = 1|A=0,Y=0)\\
&=\int_{\mathcal{X}}f(x,1)dP_{X|A=1,Y=0}(x)-\int_{\mathcal{X}}f(x,0)dP_{X|A=0,Y=0}(x)\\
&=\sum_{a\in\mathcal{A}}\left(\int_{\mathcal{X}}\left[\frac{I(a=1)}{p_1}f(x,a)-\frac{I(a=0)}{p_0}f(x,a)\right]dP_{X|A=a,Y=0}(x)\right)\\
&=\int_{\mathcal{A}}\int_{\mathcal{X}}f(x,a)\left(\frac{2a-1}{p_a}\right)\frac{dP_{X|A=a,Y=0}(x)}{dP_{X|A=a}}{dP_{X|A=a}}dP_A(a)\\
&=\int_{\mathcal{X}\times\mathcal{A}}f(x,a)\left(\frac{(2a-1)(1-\eta_a(x))}{p_a(1-p_{Y,a})}\right)dP_{X,A}(x,a);\\
\textup{DOA}(f)&= P(\widehat{Y}_f  = 1|A = 1,Y=1)+ P(\widehat{Y}_f  = 0|A = 1,Y=0)\\
&\qquad\qquad\qquad\qquad- P(\widehat{Y}_f  = 1|A = 0,Y=1)- P(\widehat{Y}_f  = 0|A = 0,Y=0)\\
&= P(\widehat{Y}_f  = 1|A = 1,Y=1)- P(\widehat{Y}_f  = 1|A = 1,Y=0)\\
&\qquad\qquad\qquad\qquad-  P(\widehat{Y}_f  = 1|A = 0,Y=1)+P(\widehat{Y}_f  = 1|A = 0,Y=0)\\
&=\int_{\mathcal{X}}f(x,1)[dP_{X|A=1,Y=1}(x)-dP_{X|A=1,Y=0}(x)]\\
&\qquad\qquad\qquad\qquad- \int_{\mathcal{X}}f(x,0)[dP_{X|A=0,Y=1}(x)-dP_{X|A=0,Y=0}(x)]\\
&=\int_{\mathcal{A}}\int_{\mathcal{X}}f(x,a)\frac{I(a=1)}{p_1}\frac{dP_{X|A=a,Y=1}(x)-dP_{X|A=1,Y=0}(x)}{dP_{X|A=a}}dP_{X|A=a}dP_A(a)\\
&\quad-\int_{\mathcal{A}}\int_{\mathcal{X}}f(x,a)\frac{I(a=0)}{p_0}\frac{dP_{X|A=a,Y=1}(x)-dP_{X|A=a,Y=0}(x)}{dP_{X|A=a}}dP_{X|A=a}dP_A(a)\\
&=\int_{\mathcal{X}\times\mathcal{A}}f(x,a)\frac{2a-1}{p_a}\left(\frac{\eta_a(x)}{p_{Y,a}}-\frac{1-\eta_a(x)}{1-p_{Y,a}}\right)dP_{X,A}(x,a).
\end{align*}
Denote
\begin{align*}
\phi_E(x,a)&=\frac{(2a-1)\eta_a(x)}{p_ap_{Y,a}};\qquad
\phi_P(x,a)=\frac{(2a-1)(1-\eta_a(x))}{p_a(1-p_{Y,a})};\\
\phi_O(x,a)&=\frac{2a-1}{p_a}\left(\frac{\eta_a(x)}{p_{Y,a}}-\frac{1-\eta_a(x)}{1-p_{Y,a}}\right).
\end{align*}
Again, when $|E^\star|$ (or $|P^\star|$, $|O^\star|$) $\leq\delta$, the result is clear. We now consider the case $|E^\star|$ (or $|P^\star|$, $|O^\star|$) $>\delta$. By the generalized Neyman-Person Lemma \ref{NP_lemma}, the fair Bayes-optimal classifiers take the form
\begin{align*}
\begin{split}
f_{t,\tau_1,\tau_0}(x,a)&=\left\{\begin{array}{lcc}
 1,    & & \phi_0(x,a)>t\phi_1(x,a);\\
  \tau_a,   & & \phi_0(x,a)=t\phi_1(x,a);\\
0,     & & \phi_0(x,a)<t\phi_1(x,a).
\end{array}\right.
\end{split}
\end{align*}
Here $\phi_0(x,a)=2\eta_a(x)-1$ and $\phi_1$ can be $\phi_E$, $\phi_P$ or $\phi_O$, depending the fairness measure. Similar to the proof of Theorem \ref{thm-opt-dp}, we can derive an equivalence between $M(x,a)>tH(x,a)$ and the specified thresholding rules, finishing the proof.

\end{proof}

\subsection{Algorithms Aiming for Group Fairness}
\label{sec:alg}

\begin{algorithm}[!t]
   \caption{FairBayes Algorithm for Other Group Fairness Measures: Equality of Opportunity, Predictive Equality, and Overall Accuracy Equality.}
   \label{alg:Fair-EO}
\begin{algorithmic}

   \STATE {\bfseries Input:} Fairness metric: Equality of Opportunity, Predictive Equality, Overall Accuracy Equality.
   Tolerance level $\delta$. Datasets $S_1=\{x_{1,i},y_{1,i}\}_{i=1}^{n_1}$ and $S_0=\{x_{0,i},y_{0,i}\}_{i=1}^{n_0}$.

   \STATE {\textbf{Step 1}:} Learn $\eta_a$  by
     \qquad  $ \widehat{\eta}\in \underset{f_\theta\in\mathcal{F}}{\text{argmin}}\frac1{n}\sum_{i=1}^{n}L(y_{i},f_{\theta}(x_{i},a_i)).$

       \STATE {\textbf{Step 2}:} Find the optimal thresholds:
  
      \IF{Fairness metric == Equality of Opportunity}
\IF{$|\widehat{E}({0})|\leq \delta$}
   \quad$\widehat{t}_{E,\delta}=0$;
  \ELSIF{$\widehat{E}({0})> \delta$}
  \quad$\widehat{t}_{E,\delta}=\sup\left\{t :\widehat{E}(t)>\delta\right\}$;
     \ELSE
  \quad$\widehat{t}_{E,\delta}=\sup\left\{t :\widehat{E}(t)>-\delta\right\}$;
\ENDIF
\STATE {\bfseries Output:} 
$\widehat{f}_{E,\delta}(x,a)=
I\left(\widehat\eta_{a}(x)>\frac{\widehat{p}_a\widehat{p}_{Y,a}}{2\widehat{p}_a\widehat{p}_{Y,a}+(1-2a)\widehat{t}_{E,\delta}}\right).$
   \vspace{0.3cm}     
   \ELSIF{Fairness metric == Predictive Equality}
\IF{$|\widehat{P}({0})|\leq \delta$}
   \quad$\widehat{t}_{P,\delta}=0$;
  \ELSIF{$\widehat{P}({0})> \delta$}
  \quad$\widehat{t}_{P,\delta}=\sup\left\{t :\widehat{P}(t)>\delta\right\}$;
     \ELSE
  \quad$\widehat{t}_{P,\delta}=\sup\left\{t :\widehat{P}(t)>-\delta\right\}$;
\ENDIF
\STATE {\bfseries Output:} 
$\widehat{f}_{P,\delta}(x,a)=
I\left(\widehat\eta_{a}(x)>\frac{\widehat{p}_a(1-\widehat{p}_{Y,a})+(2a-1)\widehat{t}_{P,\delta}}{2\widehat{p}_a(1-\widehat{p}_{Y,a})+(2a-1)\widehat{t}_{P,\delta}}\right).$
  \vspace{0.3cm}
   \ELSIF{Fairness metric == Overall Accuracy Equality}
\IF{$|\widehat{O}({0})|\leq \delta$}
   \quad$\widehat{t}_{O,\delta}=0$;
  \ELSIF{$\widehat{O}({0})> \delta$}
  \quad$\widehat{t}_{O,\delta}=\sup\left\{t :\widehat{O}(t)>\delta\right\}$;
     \ELSE
  \quad$\widehat{t}_{O,\delta}=\sup\left\{t :\widehat{O}(t)>-\delta\right\}$;
\ENDIF
\STATE {\bfseries Output:} 
$\widehat{f}_{O,\delta}(x,a)=
I\left(\widehat\eta_{a}(x)>\frac{\widehat{p}_{a}\widehat{p}_{Y,a}(1-\widehat{p}_{Y,a})+(1-2a)\widehat{p}_{Y,a}\widehat{t}_{O,\delta}}{2\widehat{p}_a\widehat{p}_{Y,a}(1-\widehat{p}_{Y,a})+(1-2a)\widehat{t}_{O,\delta}}\right).$
\ENDIF
\end{algorithmic}
\end{algorithm}

Next, we introduce algorithms to reduce unfairness measures, see Algorithm \ref{alg:Fair-EO}.
As in the main text, we only consider deterministic classifiers, as our primary goal is to learn the optimal thresholds. 
Suppose we observe independently and identically distributed data points $(x_i,a_i,y_i)_{i=1}^n$. we define the following index sets, for $a,y\in\{0,1\}$: $\mathcal{S}_{a}=\mathcal{S}_{a,1}\cup\mathcal{S}_{a,0}$ with
$\mathcal{S}_{a,y}=\{i: (a_i,y_i)=(a,y)\}$.

For all fairness measures, step 1 is exactly the same as in FairBayes (Algorithm \eqref{alg:FBC1}) for demographic parity. 
In the second step, we consider the (plug-in) group-wise thresholding rule
\begin{align*}
{f}(x,a,t)&=
I\left(\widehat\eta_{a}(x)>t_a \right),
\end{align*}
where $t_a$ are determined by the fairness constraint. 
The unfairness measures $\text{DEO}(f)$, $\text{DPE}(f)$ and $\text{DOA}(f)$ are estimated, respectively,
  as
\begin{align*}
    \widehat{E}(t)&=\sum_{i\in\mathcal{S}_{1,1}}I\left(\widehat{\eta}_1(x_i)>\frac{\widehat{p}_1\widehat{p}_{Y,1}}{2\widehat{p}_1\widehat{p}_{Y,1}-t}
\right)-\sum_{i\in\mathcal{S}_{0,1}}I\left(\widehat{\eta_0}(x_i)>\frac{\widehat{p}_0\widehat{p}_{Y,0}}{2\widehat{p}_0\widehat{p}_{Y,0}+t}\right);\\
\widehat{P}(t)&=\sum_{i\in\mathcal{S}_{1,0}}I\left(\widehat{\eta}_1(x_i)>\frac{\widehat{p}_1(1-\widehat{p}_{Y,1})+t}{2\widehat{p}_1(1-\widehat{p}_{Y,1})+t}
\right)-\sum_{i\in\mathcal{S}_{0,0}}I\left(\widehat{\eta}_0(x_i)>\frac{\widehat{p}_0(1-\widehat{p}_{Y,0})-t}{2\widehat{p}_0(1-\widehat{p}_{Y,0})-t}\right);\\
\widehat{O}(t)&=
\sum_{i\in\mathcal{S}_{1,1}}I\left(\widehat{\eta}_1(x_i)>\frac{\widehat{p}_1\widehat{p}_{Y,1}(1-\widehat{p}_{Y,1})-\widehat{p}_{Y,1}t}{2\widehat{p}_1\widehat{p}_{Y,1}(1-\widehat{p}_{Y,1})-t}
\right)-I\sum_{i\in\mathcal{S}_{1,0}}\left(\widehat{\eta}_1(x_i)>\frac{\widehat{p}_1\widehat{p}_{Y,1}(1-\widehat{p}_{Y,1})-\widehat{p}_{Y,1}t}{2\widehat{p}_1\widehat{p}_{Y,1}(1-\widehat{p}_{Y,1})-t}
\right)\\
&-\sum_{i\in\mathcal{S}_{0,1}}I\left(\widehat{\eta}_0(x_i)>\frac{\widehat{p}_0\widehat{p}_{Y,0}(1-\widehat{p}_{Y,0})+\widehat{p}_{Y,0}t}{2\widehat{p}_0\widehat{p}_{Y,0}(1-\widehat{p}_{Y,0})+t}\right)+\sum_{i\in\mathcal{S}_{0,0}}I\left(\widehat{\eta}_0(x_i)>\frac{\widehat{p}_0\widehat{p}_{Y,0}(1-\widehat{p}_{Y,0})+\widehat{p}_{Y,0}t}{2\widehat{p}_0\widehat{p}_{Y,0}(1-\widehat{p}_{Y,0})+t}\right).
\end{align*}
Here $\widehat{p}_a$ and $\widehat{p}_{Y,a}$ are plug-in estimators for $p_a$ and $p_{Y,a}$, respectively. When $|\widehat{E}(0)|\leq \delta$, (or the same inequality holds for $|\widehat{P}(0)|$, $|\widehat{O}(0)|$), we set
$\widehat{t}_{E,\delta}=0$ (or, do the same for  $\widehat{t}_{P,\delta}$, $\widehat{t}_{O,\delta}$).
However, if $|\widehat{E}(0)|> \delta$ (or the same inequality holds for $|\widehat{P}(0)|$, $|\widehat{O}(0)|$),  we set
\begin{align}\label{empirical_t_EO}
\begin{split}
  \widehat{t}_{E,\delta}&=\sup\left\{t :\widehat{E}(t)>\frac{\widehat{E}({0})}{|\widehat{E}({0})|}\delta\right\};\quad
    \widehat{t}_{P,\delta}=\sup\left\{t :\widehat{P}(t)>\frac{\widehat{P}({0})}{|\widehat{P}({0})|}\delta\right\};\\
      \widehat{t}_{O,\delta}&=\sup\left\{t :\widehat{O}(t)>\frac{\widehat{O}({0})}{|\widehat{O}({0})|}\delta\right\}.
   \end{split}
\end{align}

The final estimators of the fair Bayes-optimal classifiers under the above group fairness measures are

\begin{align*}
\widehat{f}_{E,\delta}(x,a)&=
I\left(\widehat\eta_{a}(x)>\frac{\widehat{p}_a\widehat{p}_{Y,a}}{2\widehat{p}_a\widehat{p}_{Y,a}+(1-2a)\widehat{t}_{E,\delta}}\right);\\
\widehat{f}_{P,\delta}(x,a)&=
I\left(\widehat\eta_{a}(x)>\frac{\widehat{p}_a(1-\widehat{p}_{Y,a})+(2a-1)\widehat{t}_{E,\delta}}{2\widehat{p}_a(1-\widehat{p}_{Y,a})+(2a-1)\widehat{t}_{E,\delta}}\right);\\
\widehat{f}_{O,\delta}(x,a)&=
I\left(\widehat\eta_{a}(x)>\frac{\widehat{p}_{a}\widehat{p}_{Y,a}(1-\widehat{p}_{Y,a})+(1-2a)\widehat{p}_{Y,a}\widehat{t}_{O,\delta}}{2\widehat{p}_a\widehat{p}_{Y,a}(1-\widehat{p}_{Y,a})+(1-2a)\widehat{t}_{O,\delta}}\right).
\end{align*}


\section{More Experimental Details and Results}\label{sec:more_experiments}
\subsection{Synthetic Data}
For the Gaussian synthetic data, we consider two combinations of $(p,\sigma)$. The result for $p=10$ and $\sigma=1$ is shown in the main text. Here, we present the result for $p=2$ and $\sigma=0.5$. We employ the same training settings and notations as described in the main text. The results are presented in Table \ref{table_ddp_dim_2}.
We observe a similar pattern as we did in the main text. The proposed FairBayes algorithm closely tracks the  behavior of the fair Bayes-optimal classifier.

\begin{table}[!h]
\caption{Classification accuracies and levels of disparity for the fair Bayes-optimal classifier and FairBayes using logistic regression.}
\label{table_ddp_dim_2}
\vskip -0.2in
\begin{center}
\setlength{\tabcolsep}{5.4pt}
\renewcommand{\arraystretch}{0.95}
\begin{small}
\begin{sc}
\begin{tabular}{cc|cc|cc|cc}
\hline
\multicolumn{4}{c|}{Demographic Parity}  & \multicolumn{4}{c}{Equality of Opportunity} \\\hline
\multicolumn{2}{c|}{Theoretical}  & \multicolumn{2}{c|}{FairBayes} &
\multicolumn{2}{c|}{Theoretical}& \multicolumn{2}{c}{FairBayes} \\ \hline
$\delta$& $\text{ACC}_{D,\delta}$ & DDP & $\text{ACC}_{D,\delta}$ &  $\delta$& $\text{ACC}_{E,\delta}$& DEO& $\text{ACC}_{E,\delta}$ \\ 
\hline
0.00 & 0.766 & 0.016(0.012) & 0.766(0.007) & 0.00 & 0.789  & 0.015 (0.010) & 0.789 (0.006) \\
0.10 & 0.783 & 0.101(0.019) & 0.782(0.007) & 0.06 & 0.798 & 0.062 (0.019) & 0.797 (0.006) \\ 
0.20 & 0.795 & 0.200(0.020) & 0.795(0.006) & 0.12 & 0.803 & 0.121 (0.020)& 0.802 (0.006)\\
0.30 & 0.803 & 0.300(0.018) & 0.803(0.006) &0.18 & 0.805 & 0.181(0.021) & 0.805(0.006)\\\hline\end{tabular}
\end{sc}
\end{small}
\end{center}
\vskip -0.15in
\end{table}

\subsection{Empirical Data Analysis}
{\bf Training details.} We employ the same settings as in \cite{CHS2020} for all algorithms, except for the batch size of our FairBayes method. As our FairBayes method splits the data into two parts, we also divide the batch size by two accordingly. The details are summarized in Table \ref{detail}.

\begin{table}[!h]
\caption{Training details on three datasets}
\label{detail}
\begin{center}
\begin{small}
\begin{sc}
\begin{tabular}{c|c|c|c}
\hline
  Dataset &Adult &  COMPAS& Law School \\
\hline
 Batch size  &512 &2048& 2048\\
Training Epochs &200 & 500  &200 \\
Learning rate & 1e-1
&5e-4&   2e-4 \\\hline
\end{tabular}
\end{sc}
\end{small}
\end{center}
\end{table}

{\bf Performance on  the ``COMPAS'' and ``LawSchool''  datasets.} 

\begin{compactitem}

\item {\it COMPAS:} In  the COMPAS dataset, $Y$  indicates whether or not a criminal will reoffend. Here $X$ includes  prior criminal records, age and an indicator of misdemeanor.  The protected attribute $A$  is the race of an individual, ``white-vs-non-white''.

\item {\it{Law school}}: The target variable $Y$ in the Law school dataset is whether a student gets admitted to law school. 
Thus $X$ includes the LSAT scores, undergraduate GPA and more. The protected attribute $A$ we consider is the race,``white-vs-non-white''.
\end{compactitem}

Tables  \ref{table_COMPAS} and \ref{table_LawSchool} present the experimental results on the ``COMPAS'' and ``LawSchool'' datasets, respectively. We observe that FairBayes
 outperforms other methods on the Lawschool dataset with   perfect disparity control and satisfactory model accuracy. The advantage of  is further supported by Figure \ref{fig:Law}, where we see that FairBayes achieves the best fairness-accuracy tradeoff.

\begin{table}[b]
 \caption{Classification accuracy and level of unfairness on the COMPAS dataset (Here, domain based algorithms and PPUO are designed for demographic parity).}
 \label{table_COMPAS}
 \vskip -0.1in
 \begin{center}
 \begin{small}
 \begin{sc}
 \begin{tabular}{c|c|cc|cc}\hline
  \multicolumn{2}{c|}{}  &\multicolumn{2}{c|}{Demographic Parity}  & \multicolumn{2}{c}{Equality of Opportunity} \\\hline
 method &   parameter &ACC$_D$ & DDP & ACC$_E$& DEO\\\hline
 FairBayes & $\delta=0 $&0.651 (0.004) & 0.007 (0.007)& 0.647 (0.005)& 0.029 (0.013)\\
KDE based &$\lambda=0.75$&0.647 (0.003) &	0.006 (0.004)&	 0.643 (0.004)&	0.023 (0.010)\\
Adversarial&$\alpha=3$& 0.593 (0.056) &	 0.14 (0.078) &0.624 (0.029)&	 0.156 (0.052)\\
PPUO && 0.652 (0.005)     & 0.016 (0.011)    &      &   \\
 Domain dis & & 0.663(0.003)	&0.208(0.005) &&	\\
 Domain ind& & 0.672 (0.003) &	0.217 (0.005)& &	\\
\hline
\end{tabular}
 \end{sc}
\end{small}
 \end{center}
 \vskip -0.2in
 \end{table}
 
\begin{table}[!t]
 \caption{Classification accuracy and level of unfairness on the LawSchool dataset (Here, domain based algorithms and PPUO are designed for demographic parity).}
 \label{table_LawSchool}
 \vskip -0.1in
 \begin{center}
 \begin{small}
 \begin{sc}
 \begin{tabular}{c|c|cc|cc}\hline
  \multicolumn{2}{c|}{}  &\multicolumn{2}{c|}{Demographic Parity}  & \multicolumn{2}{c}{Equality of Opportunity} \\\hline
 method &   parameter &ACC$_D$ & DDP & ACC$_E$& DEO\\\hline
 FairBayes & $\delta=0 $&0.787 (0.000) & 0.001 (0.001)& 0.788 (0.000)& 0.011 (0.005)\\
KDE based &$\lambda=0.75$&0.788 (0.001) &	0.021 (0.002)&	 0.788 (0.000)&	0.028 (0.004)\\
Adversarial&$\alpha=3$& 0.777 (0.005) &	 0.057 (0.012) &0.746 (0.004)&	 0.044(0.016)\\
PPUO && 0.790 (0.000)     & 0.070 (0.001)    &      &   \\
 Domain dis & & 0.790(0.001)	&0.097(0.002) &&	\\
 Domain ind& & 0.788 (0.001) &	0.124 (0.002)& &	\\
\hline
\end{tabular}
 \end{sc}
\end{small}
 \end{center}
 \vskip -0.2in
 \end{table}

\begin{figure}[!t]
\begin{center}
\centerline{\includegraphics[width=0.95\columnwidth]{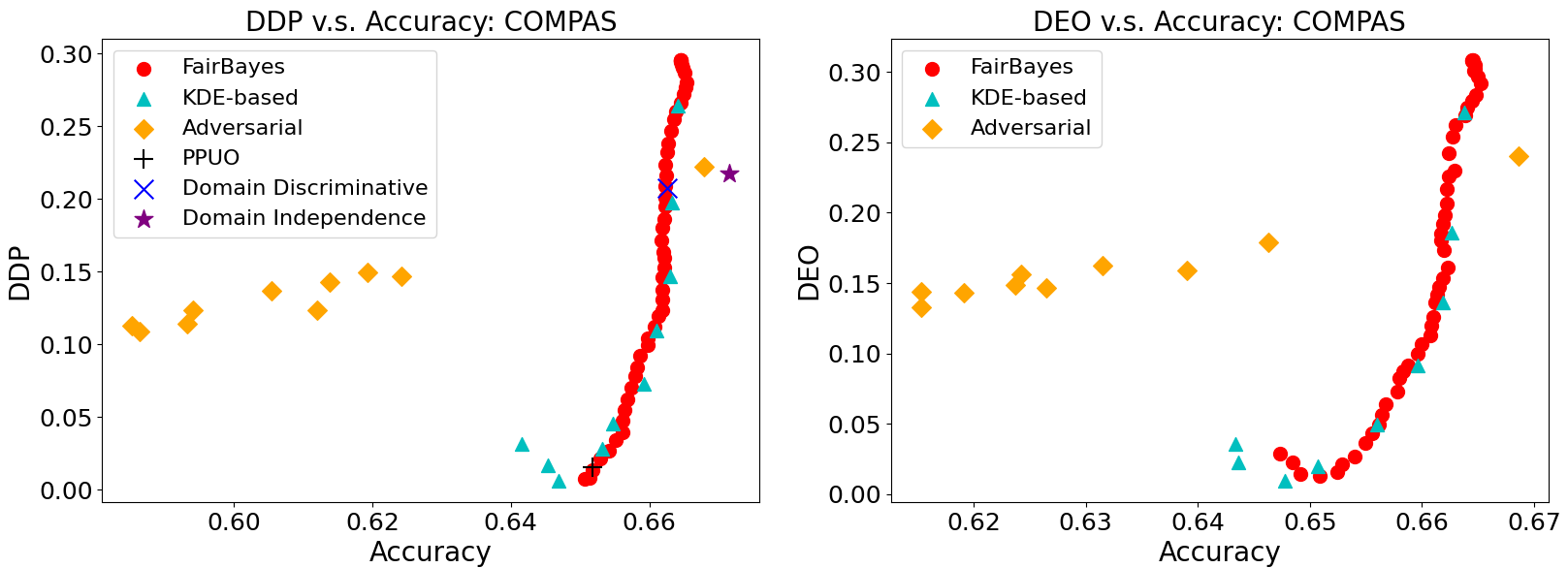}}
\caption{Fairness-Accuracy tradeoff on the ``COMPAS'' dataset. Left panel: Tradeoff with respect to demographic parity (DDP). Right panel: Tradeoff with respect to Equality of opportunity (DEO).}
\label{fig:com}
\end{center}
\end{figure}

\begin{figure}[!t]
\begin{center}
\centerline{\includegraphics[width=0.95\columnwidth]{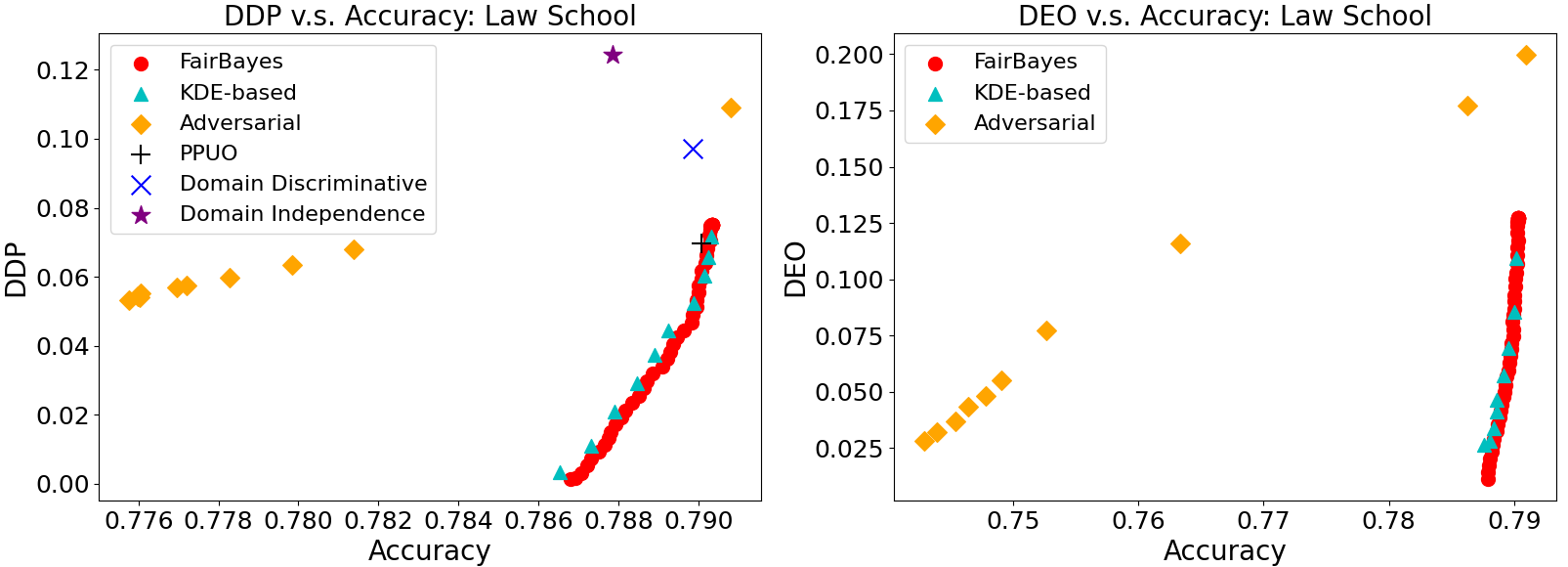}}
\caption{Fairness-Accuracy tradeoff on the ``Law school'' dataset. Left panel: Tradeoff with respect to demographic parity (DDP). Right panel: Tradeoff with respect to Equality of opportunity (DEO).}
\label{fig:Law}
\end{center}
\end{figure}

 On the COMPAS dataset, the domain-based methods achieve the best accuracy, but they do not succeed in controlling disparity. 
 As shown in Figure \ref{fig:com}, FairBayes achieves nearly the best performance on the fairness-accuracy tradeoff, while being slightly inferior to KDE based learning. This may caused by the relatively small size of the COMPAS dataset. 
 This small data size may result in a loss of accuracy when estimating $\eta_a(x)$, 
 which is used by  FairBayes to adjusts the per-class thresholds.   
 We emphasize that the success of  FairBayes relies on the consistent estimation of the per-group feature-conditional probabilities of the labels.
 
However, FairBayes is superior to other methods in computational speed, as the fairness-accuarcy tradeoff under different disparity levels or even different fairness measures can be generated based on only a single training run. The running times for various methods are
reported next.

\subsection{Running time analysis}
In this section, we compare the running time of various methods. All experiments are conducted on a personal computer with an Intel(R) Core(TM) i9-9920X CPU @ 3.50Ghz and an NVIDIA GeForce RTX 2080 Ti GPU. Table \ref{time:one} summarizes the average running time for different methods with one specific tuning knob. As we can see, domain-based training, FairBayes and post-processing unconstrained learning are more computationally efficient than the others. However, both domain-based training and post-processing unconstrained learning are only designed for perfect fairness while our FairBayes can handle any pre-determined disparity level.

\begin{table}[!h]
\caption{Average running time (seconds) for different methods aiming for demographic parity}
\label{time:one}
\begin{center}
\begin{small}
\begin{sc}
\begin{tabular}{c|c|c|c}
\hline
  Methods &Adult &  COMPAS & Law School  \\
\hline
FairBayes & 60 &16 &152\\
KDE & 147 &26 &201\\
Adversarial &206 & 51& 464\\
PPUO & 70 &18 &161\\
Domain\_Dis & 50 &11 &105\\
Domain\_Ind & 54 &12 &103\\\hline
\end{tabular}
\end{sc}
\end{small}
\end{center}
\end{table}

We further compare the running time of generating  fainess-accuracy tradeoff curves. For FairBayes, we consider 50 different disparity levels and for KDE-based learning and Adversarial based learning, we set 10 different tuning knobs. As we can see from Table  \ref{time:trade}, FairBayes is  significantly faster than the other two methods. 
Compared to the results in table \ref{time:one},  FairBayes spends almost the same time to derive a tradeoff curve, while the other two take about ten times more time.
In addition, FairBayes can further use the output from unconstrained learning  to generate fair classifiers under other fairness measures, such as equality of opportunity, which is  desirable when several different fairness measures are considered at the same time.

\begin{table}[!h]
\caption{Average running time (seconds) for deriving a tradeoff curve}
\label{time:trade}
\begin{center}
\begin{small}
\begin{sc}
\begin{tabular}{c|c|c|c}
\hline
  Methods & Adult&COMPAS&  Law School  \\
\hline
FairBayes & 66 &16 &159\\
KDE & 1415 &243 &1945\\
Adversarial &2279  &499 &4499\\\hline
\end{tabular}
\end{sc}
\end{small}
\end{center}
\end{table}

\subsection{Experiments with a Multi-class Protected Attribute}
In this section, we conduct more experiments to validate Theorem \ref{thm-opt-dp-multi}  and the FairBayes algorithm \ref{alg:multi} with a multi-class protected attribute. In this case, as in  \citep{CHS2020}, we use the following measure to quantify the level of unfairness with respect to demographic parity: 
$$\text{DDP}(f)=\sum_{a=1}^{|\mathcal{A}|}|P(\widehat{Y}_f  = 1|A = 1) -P(\widehat{Y}_f  = 1)|.$$
\subsubsection{Synthetic data}
We first consider a theoretical model where the theoretical fair Bayes-optimal classifier can be
derived explicitly to compare FairBayes with the theoretical benchmark.

We generate data $a\in\mathcal{A}=\{1,2,...,|\mathcal{A}|\}$ and $y\in\{0,1\}$ with, for $a\in\mathcal{A}$, $p_a=\frac{a^{1/2}}{\sum_{a=1}^{|\A|}a^{1/2}}$ and $p_{Y,a}\sim U(0,1)$, where $U(0,1)$ is the uniform distribution over $[0,1]$.
Conditional on $A=a$ and $Y=y$, $X$ is generated from a multivariate
Gaussian distribution $N(\mu_{ay},2^2I_{|\mathcal{A}|})$. Here, we set $\mu_{ay}=(2y-1)e_a$, where $e_a\in\mathbb{R}^{|\mathcal{A}|}$ is the unit vector with the $a$-th element equal to unity.

We consider three cases, $|\mathcal{A}|=3$, $|\mathcal{A}|=5$ and $|\A|=10$. For all cases, we set the training data sample size as $10000\times|\A|$ and the test data sample size as $5000$. As the Bayes-optimal classifier is linear in $x$, we use logistic regression to learn the group-wise conditional probabilities.

We repeat the experiments 100 times and present the simulation results in table \ref{table_synth_mu}. To better validate the performance of FairBayes, we also present the result for unconstrained optimization. As we can see, FairBayes effectively removes the disparity effect from the unconstrained classifier, and it closely tracks the behavior of the true fair Bayes-optimal classifier.

\begin{table}[!h]
\caption{Classification accuracy and DDP of the true fair Bayes-optimal classifier and our estimator trained via logistic regression.}
\label{table_synth_mu}
\vspace{-0.2cm}
\begin{center}
\setlength{\tabcolsep}{5.7pt}
\renewcommand{\arraystretch}{0.95}
\begin{small}
\begin{sc}
\begin{tabular}{cc|cc|cc|cc}
\hline
\multicolumn{4}{c|}{Fair classification}  & \multicolumn{4}{c}{Unconstrained classification} \\\hline
\multicolumn{2}{c|}{Theoretical}  &\multicolumn{2}{c|}{FairBayes}  &\multicolumn{2}{c|}{Theoretical}  &\multicolumn{2}{c}{Logistic regression}\\\hline

  $|\A|$ & ACC &DDP& ACC & DDP& ACC&{DDP}& {{ACC}} \\\hline
3& 0.615 & 0.021\,(0.011) & 0.616\,(0.008)  & 1.144 & 0.874& 1.142\,(0.014) & 0.875\,(0.004)\\
5 & 0.665 & 0.042\,(0.16) & 0.664\,(0.007) & 1.591 & 0.831 & 1.589\,(0.027) & 0.831\,(0.005)\\
10 & 0.615  & 0.190\,(0.042) & 0.615\,(0.008) & 3.688 & 0.824& 3.688\,(0.052) & 0.824\,(0.005)\\\hline
\end{tabular}
\end{sc}
\end{small}
\end{center}
\vspace{-0.4cm}
\end{table}

\subsubsection{Empirical Data}
Next, we conduct experiments on the  ``Adult'' dataset. This time, we have $|\A|=4$ with the protected attribute being the combination between ``Race'': white v.s. non-white and Gender: male v.s. female. We adopt the same experimental settings as for the binary protected attribute case and repeat the experiment 20 times. Table \ref{table_adult_multi} presents the simulation results for five different methods \footnote{We do not include post-processing unconstrained optimization as it is designed for a binary protected attribute.}. We observe that FairBayes outperforms other methods, achieving the best disparity control, almost the highest model accuracy and the smallest standard deviation of the performance metrics.

\begin{table}[!h]
 \caption{Classification accuracy and level of unfairness on the Adult dataset with a multi-class protected attribute.}
 \label{table_adult_multi}
 \vskip -0.1in
 \begin{center}
 \begin{small}
 \begin{sc}
 \begin{tabular}{c|c|ccc}\hline
 method &   parameter &ACC$_D$ & DDP &Runing time (seconds) \\\hline
 FairBayes & &0.831\,(0.001) & 0.030\,(0.013) & 57\\
KDE based &$\lambda=0.75$&0.826\,(0.006) &	0.039\,(0.024) &197\\
Adversarial&$\alpha=3$& 0.793\,(0.026) &	 0.221\,(0.190)  &213\\
 Domain dis & & 0.832\,(0.009)	& 0.143\,(0.053) &45	\\
 Domain ind& &0.831\,(0.011)   &0.175\,(0.070)	&50	\\
\hline
\end{tabular}
 \end{sc}
\end{small}
 \end{center}
 \vskip -0.2in
 \end{table}
 
\end{document}